\theoremstyle{plain}
\newtheorem{definition}{Definition}
\newtheorem{example}{Example}
\newtheorem{proposition}{Proposition}
\newtheorem{theorem}{Theorem}
\title{ Architectural  change in neural networks using fuzzy vertex pooling}
\author[1]{Shanookha Ali\thanks{shanookhaali@gmail.com}}
\author[2]{Nitha Niralda\thanks{nithaniraldapc@gmail.com}}
\author[3]{Sunil Mathew\thanks{sm@nitc.ac.in}}
\affil[1]{\textit{Department of General Science, Birla Institute of Technology and Science Pilani, International Academic City, P.O. Box: 345055,  Dubai}}
\affil[2]{\textit{Department of Mathematics, Providence Women's College, Calicut, India }}
\affil[3]{\textit{Department of Mathematics, National Institute of Technology, Calicut, India }}
\date{}
\begin{document}
	\maketitle

	\begin{abstract}
 The process of pooling vertices involves the creation of a new vertex, which becomes adjacent to all the vertices that were originally adjacent to the endpoints of the vertices being pooled. After this, the endpoints of these vertices and all edges connected to them are removed. In this document, we introduce a formal framework for the concept of fuzzy vertex pooling (FVP) and provide an overview of its key properties with its applications to neural networks. The pooling model demonstrates remarkable efficiency in minimizing loss rapidly while maintaining competitive accuracy, even with fewer hidden layer neurons. However, this advantage diminishes over extended training periods or with larger datasets, where the model's performance tends to degrade. This study highlights the limitations of pooling in later stages of deep learning training, rendering it less effective for prolonged or large-scale applications. Consequently, pooling is recommended as a strategy for early-stage training in advanced deep learning models to leverage its initial efficiency.
\end{abstract}

fuzzy vertex pooling, f-graph pooling,  f-cycle pooling,  neural network, fuzzy neural network.

\section{Introduction}
{I}n recent years, applying deep learning to f-graphs is a fast-growing field. The application of Fuzzy Convolutional Neural Networks (FCNNs) to the sparsely linked data that f-graphs depict serves as the basis for many of these research. While numerous Fuzzy Graph Convolutional Networks (FGCNs) have been presented, there haven't been many pooling layer proposals. However, intelligent pooling on fuzzy networks has a lot of potential, by lowering the number of vertices, it may be able to both discover clusters and minimise computing requirements. These two things hold the promise of transforming flat vertices into hierarchical sets of vertices. They are also a first step toward FGNNs being able to alter graph topologies rather than just vertex properties \cite{murphy2019relational}.\medskip

\begin{figure}
	\centering
	\includegraphics[width=9cm, height=8 cm]{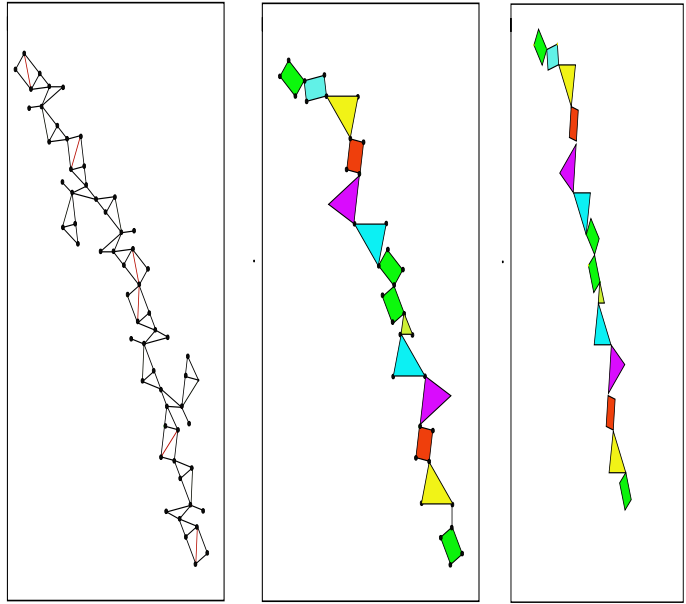}
	\caption{Vertex pooling in a f-graph. The graph to the right is produced by pooling the original f-graph twice.}
\end{figure}

\noindent In 1965, Zadeh \cite{zadeh1965fuzzy} introduced the paradigm-shifting idea of fuzzy logic, which was crucial in influencing how many mathematical and engineering ideas are now seen. Rosenfeld used this logic in\textbf{ 1975} \cite{rosenfeld1975fuzzy} to redefine a number of graph theory terms. These days, fuzzy graph theory is a popular topic in mathematics.  Mathew and Sunitha investigated fuzzy vertex connectivity and fuzzy edge connectivity \cite{mathew2010node} in 2010. In 2018, average fuzzy vertex connection was proposed by Shanookha et al after further research on fuzzy vertex connectivity and fuzzy containers  \cite{ali2018vertex,ali2024containers} later they formulated concepts of hamiltonian fuzzy graphs and applied them in analysis of Human trafficking \cite{ali2021hamiltonian} in 2021. \medskip

\noindent It makes no difference in vertex pooling if two vertices are joined by an edge; if they are, the edge will vanish when they are pooled. The idea of pooling the vertices was adopted by Pemmaraju and Skiena  \cite{skiena1990combinatorics}. Two  vertices $p$ and $q$ are connected in an undirected f-graph if a path can be found between them. Every pair of vertices must be connected for an f-graph to be considered connected. Partitioning an undirected graph into its most connected subgraphs is the goal of the f-graph connectivity problem.  The term \lq\lq f-graph pooling\rq\rq refers to this method. In addition to connection issues, it is a rather straightforward technique that may be used to solve f- cycle and f- tree issues as well. We assume the f-graph is undirected.\medskip

\noindent Fuzzy sets can be used to describe various aspects of Neural
Computing. That is, fuzziness may be introduced at the input output signals, synaptic weights, aggregation operation and activation function of individual neurons to make it fuzzy neuron. Different aggregation operations
and activation functions result in fuzzy neurons with different properties.
Thus there are many possibilities for fuzzification of an artificial neuron.
So we may find a variety of fuzzy neurons in the literature \cite{ibrahim1996introduction,tsoukalas1996fuzzy,rajasekaran2003neural,ross1997fuzzy}.  Sameena and Sunitha \cite{sameena2009fuzzy}, studeied  the fuzzy neural network architecture is isomorphic to the fuzzy graph model and the output of a fuzzy neural
network with OR fuzzy neuron is equal to the strength of strongest
path between the input layer and
the out put layer.  Murphy et. al \cite{murphy2019relational} generalizes graph neural networks (GNNs) beyond those based on the Weisfeiler-Lehman (WL) algorithm, graph Laplacians, and diffusions.\medskip

\noindent The work by Ramya and Lavanya \cite{ramya2023contraction} delves into the specific operations of pooling and domination
in fuzzy graphs. Pooling  involves merging vertices based on their memberships, while
domination deals with identifying dominating vertices. These operations are fundamental in
simplifying graph structures and could have implications for optimizing information flow and
representation learning in GNNs. Qasim et al \cite{qasim2019learning} introduced distance-weighted graph networks for learning representations of irregular
particle detector geometry. While not directly related to fuzzy graphs, their approach highlights
the importance of weighted connections in graph-based learning. This is particularly relevant when
considering how fuzzy graph pooling can influence the weighting of edges and vertices in
GNNs.  \medskip

\noindent In \cite{al2019fuzzy}, Mutab provides a mathematical exploration of fuzzy graphs, discussing their properties and
applications. The paper offers insights into how fuzzy graphs can model imprecise relationships,
an aspect that aligns with the core principle of fuzzy memberships. Understanding the
mathematical underpinnings is vital for integrating fuzzy graph pooling into GNN
architectures. This is
because pooling reduce the complexity of hidden layers and eliminate weak edges within
them. Though this changes the structure of the network midway while training, creating higher
variance in the output resulting in a higher loss overall. However, with the existing algorithms of
forward and backprogpagation, this loss would be quickly minimized.
So far, a form resembling fuzzy graph pooling has been used in convolutional and pooling
networks, but not on simple fuzzy graph networks. There is no conclusive study conducted on
simple fuzzy graph networks to test this hypothesis. \medskip

The paper introduces vertex pooling in fuzzy graphs, a novel pooling method for Graph Neural Networks (GNNs) based on edge pooling.
Unlike traditional pooling methods, FVP  focuses on merging vertices via edges, preserving the graph structure while avoiding vertex loss.
It is highly efficient, operating on sparse graph representations with run-time and memory scaling linearly with the number of edges.
The approach is flexible and seamlessly integrates into various GNN architectures, including GCN, GraphSAGE, and GIN.
EdgePool introduces a new way to handle edge features, enhancing its applicability in diverse graph-based tasks.
It also supports unpooling, enabling graph reconstruction and effective application to vertex classification tasks.
Experiments show consistent performance improvements when FVP is used, with notable gains in scalability for larger graphs.
The method is particularly suitable for localized changes that avoid computational overhead for the entire graph.
In general, FVP is a significant advancement in fuzzy graph pooling, offering a practical, scalable, and robust solution for modern GNN applications. In addition, in this paper we show that the FGP problem is NP-hard depends on the characterization of pooling method.
\section{ Fundamental concepts}
Most of the definitions and preliminary results used in this paper can be seen in \cite{bhattacharya1987some,bhutani2003strong,mathew2018fuzzy,mathew2009types,mordeson2012fuzzy}. Let $\zeta$ be a set. The triplet  $G=(\zeta,\sigma,\mu) $ is called a \emph{f-graph} if each element $\zeta \in \zeta$ and  $pq \in \zeta \times \zeta$ are assigned  non-negative real numbers $\sigma(v)$ and $\mu(pq),$ where $\sigma : \zeta \rightarrow [0, 1] $ and $\mu : \zeta \times \zeta \rightarrow [0, 1]$, called the membership values of $v$ and $e$ respectively, such that for all $p, q \in \zeta$, $\mu(p q) \leq \sigma(p) \wedge \sigma(q),$ where $\wedge$ denote the minimum. We also denote $G=(\sigma,\mu)$ to represent a f-graph. All f-graphs considered in this article are simple and connected.  $\sigma^*$  and  $\mu^*$  respectively denote the   \textit{vertex set} and \textit{edge set} of the f-graph $G$.\medskip

\noindent A f-graph $H = (\tau,\nu)$ is called a \emph{partial f-subgraph} of $G = (\sigma, \mu)$ if $\tau (v)\leq \sigma (v)$ for all $v\in \tau^*$ and $\nu(uv) \leq	 \mu(uv)$ for all $uv \in \nu^*$. If $H = (\tau,\nu)$ is a partial f-subgraph of $G = (\sigma, \mu)$ such that $\tau(v) = \sigma(v)$ for all $v \in \tau^*$ and	 $\nu(uv) = \mu(uv)$ for all $uv \in \nu^*$, then $H$ is called a \emph{f-subgraph} of $G$. $G-pq$ ($G-v$)  is a f-subgraph of $G$ obtained by deleting the edge $pq$ ( or vertex $v$)  from $G$.  Note that a f-subgraph $H = (\tau,\nu)$ spans the f-graph $G = (\sigma, \mu)$ if $\tau = \sigma$.	  A \emph{path} $P$ in a f-graph $G=( \sigma, \mu)$ is a sequence of distinct vertices $p_{0}, p_{1}, \cdots, p_{n}$ such that $\mu(p_{i-1} p_{i})>0$, $1 \leq i \leq n$ and all the vertices are distinct except possibly the first and last. The \emph{strength} of a path, $s(P)$ is the membership value of the weakest edge of the path and it always lies between 0 and 1. If $s(P) = CONN_G (p, q)$, then $P$ is called a \emph{strongest} $p-q$ path. $G$ is said to be \emph{connected} if $CONN_G (p, q) > 0$ for each $p, q \in \sigma^*$. 	  An edge $pq$ of a f-graph $G$ is called $\alpha$-\emph{strong} if $\mu(pq) > CONN_{G-pq} (p, q)$.  $pq \in \mu^*$ is called $\beta$-\emph{strong} if $\mu(pq) = CONN_{G-pq} (p, q)$ and  a $\delta$-\emph{edge} if $\mu(pq) < CONN_{G-pq} (p, q)$. Thus an edge is \emph{strong} if $\mu(pq) \geq CONN_{G-pq} (p, q)$. A path $P$ is called a \emph{strong path} if all of its edges are strong. If the removal of an edge ( or a vertex) reduces the strength of connectedness between some pair of vertices in $G$, then the edge ( vertex ) is called a \emph{fuzzy bridge} ( or \emph{fuzzy cutvertex} ) of $G$.\medskip

\noindent A f-graph without fuzzy cutvertices is called a \emph{f- block / block}.  A connected f-graph $G = (\sigma, \mu)$ is called a \emph{f- tree} if it has a spanning f-subgraph $F = (\sigma, \nu)$ which is a tree, where for all $pq$ not in $\nu^*$, there exists a path from $p$ to $q$ in $F$, whose strength is more than $\mu(pq)$.
The \textit{degree} of a vertex $p\in \sigma^{\ast}$ is defined as $d(p) = \sum_{q \neq p} \mu(qp).$ The \textit{minimum degree} of $G$ is $\delta(G)= \wedge \{ d(p) : p \in \sigma^*\}$
and the maximum degree of $G$ is $\Delta(G) = \vee \{ d(p) = p \in \sigma^*\}.$ The \textit{strong degree} of a vertex $p \in \sigma^*$ is defined as the sum of membership
values of all strong edges incident at $p.$ It is denoted by $d_s(p,G).$ Also if $N_s(p,G)$ denote the set of all strong neighbors of $p$, then  $d_s(p,G) = \sum_{q \in N_s(p,G)} \mu(qp).$\medskip

\noindent Here are few notations that is carried out in this paper:
\begin{eqnarray*}
	d_{s_{G}}(p) &-& \text{ Strong degree of vertex $p$ in $G$}\\
	N_{s_{G}}(p)&-& \text{ Strong neighbor set of vertex $p$ in $G$}\\
\end{eqnarray*}
The pooling of two vertices in a graph, $p_i$ and $p_j$, results in a graph where the two original vertices, $p_i$ and $p_j$, are replaced by a single vertex $p$, which is next to the union of the original neighboring vertices. If $p_i$ and $p_j$ are joined by an edge during vertex pooling, it makes no difference because the edge will vanish after $p_i$ and $p_j$ are pooled. 

\section{Fuzzy vertex pooling in f-graphs}

It is easy to intuitively understand the meaning of  fuzzy vertex pooling (FGP). In Figure \ref{fig2}, we can see
an example when we pool adjacent vertices $g$ and $j$ that belongs to a  f- cycle $g-j-k-g$ of strength $0.6$, and after vertex  pooling the result is an edge ( cycle is replaced by a single edge), since the vertices to be pooled might have
common neighbors as  in Figure \ref{fig2}.  This ways of pooling vertices in a f-graph $G$ are  denoted as $G/_{gj}$. The formal definition of fuzzy vertex pooling (FGP) in a f-graphs is as follows.
\begin{definition}\label{def1}
	Let $G: (\sigma, \mu)$ be a f-graph  with $p,q$ $ \in \sigma^*$. Identify $p$ with $q$ as a new vertex $v_c$ , we call this fuzzyfied operation pooling of vertices and the new graph is denoted by $G/_{pq}:  (\sigma_c, \mu_c)$. Formally, $G/_{pq}$ is a f-graph such that 
	\begin{itemize}
		\item[i] $\sigma_c^* = \{\sigma^* \cup v_c\} \setminus \{p,q\}$
		\item[ii] $\mu_c^* = \{\mu^* \setminus \{\{up, vq: u \in N(p) \& v \in N(q) \}  \cup \{ wv_c: w \in (N(p)\setminus\{q \}) \cap  ( N(q)\setminus\{p \}) \}$ with
	\end{itemize} 
	$\sigma_c(v) =\begin{cases}
		\sigma(v) & :v \in \sigma^* \setminus \{p,q\}\\
		\sigma(p)\wedge \sigma(q)& :v=v_c
	\end{cases} $ and 
	
	$\mu_c(uv) =\begin{cases}
		\mu(uv) & :u , v \not\in N(p)\cup N(q)\\
		\mu(uv_c)& :u \not\in N(q)\\
		\mu(vv_c)& :v \not\in N(p) \\
		\mu(up)\wedge \mu(uq)& : u \in N(p) \cap N(q)
	\end{cases} $
\end{definition}
\begin{figure}
	\centering
	\includegraphics[width=8cm, height=5 cm]{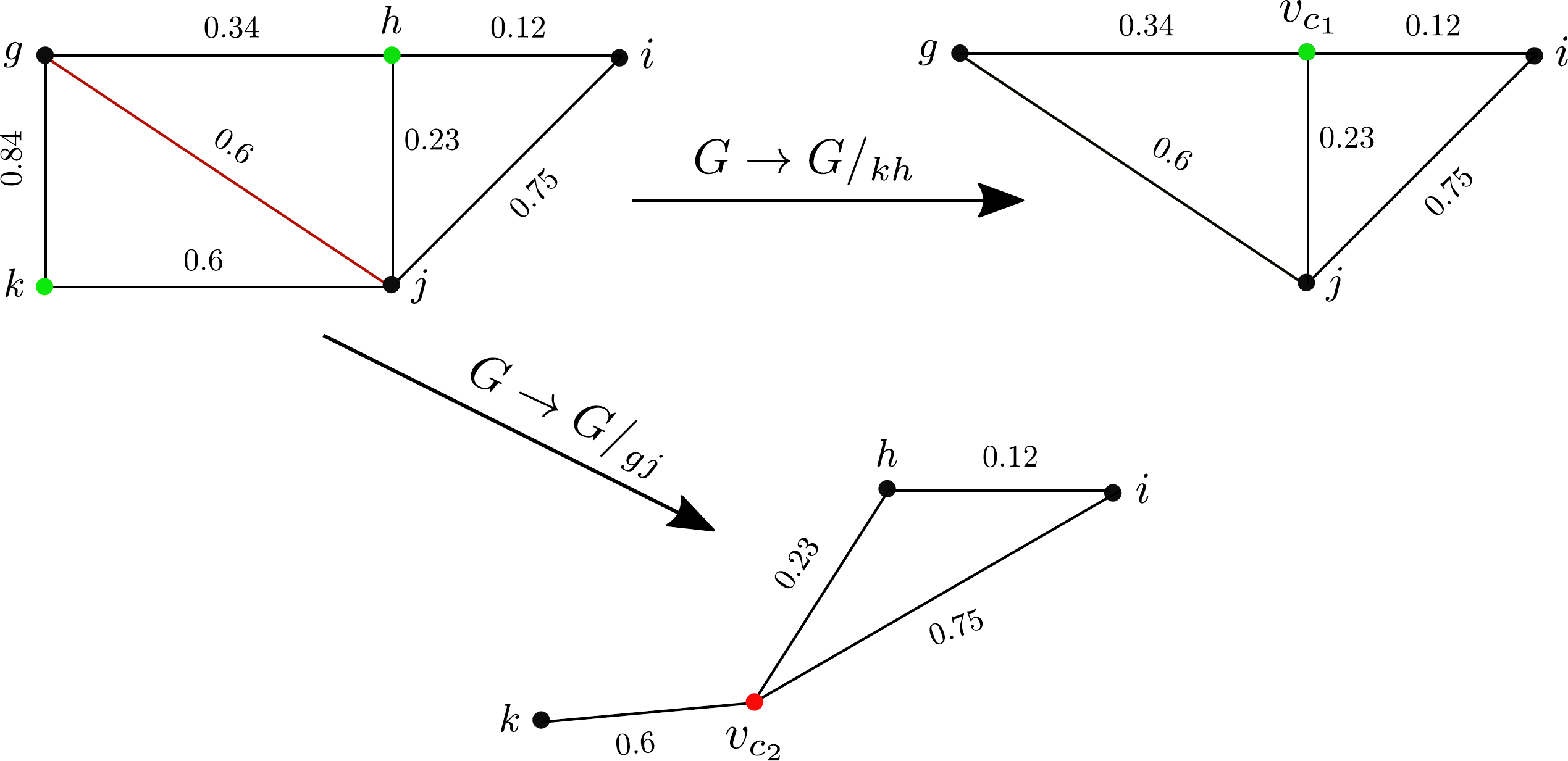}
	\caption{ Original f-graph $G$; resulting f-graph $G/_{kh}$ after fuzzy vertex pooling edge $k\text{ and }h$; resulting f-graph $G/_{gj}$ after fuzzy vertex pooling edge $gj$ }
	\label{fig2}
\end{figure}
\begin{example}
	Let $G:(\sigma, \mu)$ be a f-graph with $\sigma^* = \{g,h,i,j,k\}$ in Figure \ref{fig2}. Let $\{k,h\}$ be the  vertices to pool. Note that the remaining vertices, $\{g,j,i\}$ $\in$ $\sigma^*$ are left on their own left on their own. We call new vertex as $v_{c_1}$. $G/_{kh}:(\sigma_1, \mu_1)$ is the vertex pooled f-subgraph of $G$ with $\sigma_1^* = \{g,i,j,v_{c_1}\}$ and $\mu_{2}^* = \{gj,ji,jv_{c_1},gv_{c_1},v_{c_1}i\}$. Here 
	\begin{eqnarray*}
		\mu(jv_{c_1}) &=& \wedge\{\mu(jh), \mu(jk)\}\\
		\mu(gv_{c_1}) &=& \wedge\{\mu(gh), \mu(gk)\}\\
		\mu(iv_{c_1}) &=& \mu(hi)
	\end{eqnarray*}
	Let $\{g,j\}$ be the  vertices to pool. Note that the remaining vertices, $\{k,h,i\}$ $\in$ $\sigma^*$ are left on their own left on their own. We call new vertex as $v_{c_2}$. $G/_{gj}:(\sigma_2, \mu_2)$ is the vertex pooled f- subgraph of $G$ with $\sigma_2^* = \{k,i,h,v_{c_2}\}$ and $\mu_{2}^* = \{kv_{c_2},hv_{c_2},v_{c_2}i\}, hi$. Here 
	\begin{eqnarray*}
		\mu(kv_{c_2}) &=& \wedge\{\mu(jk), \mu(gk)\}\\
		\mu(hv_{c_2}) &=& \wedge\{\mu(gh), \mu(hj)\}\\
		\mu(iv_{c_1}) &=& \mu(ji)
	\end{eqnarray*}
\end{example}
\begin{proposition}
	Fuzzy vertex pooling in a  f-graph is commutative.
\end{proposition}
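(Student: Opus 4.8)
The plan is to read the assertion as the claim that pooling does not depend on the order in which the two vertices are listed, i.e. that $G/_{pq} = G/_{qp}$ as f-graphs. Equality of two f-graphs means equality of their vertex sets, of their edge sets, and of both membership functions $\sigma_c$ and $\mu_c$; so I would verify each of these four data in turn and show that every one of them is invariant under the transposition $p \leftrightarrow q$. The whole argument rests on one structural observation: the only operations entering Definition~\ref{def1} are the set operations $\cup$, $\cap$, $\setminus$ applied to the unordered pair $\{p,q\}$ and to the neighborhoods $N(p)$, $N(q)$, together with the minimum $\wedge$, and every one of these is symmetric in its arguments.

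First I would dispatch the two easy data. The vertex set $\sigma_c^* = (\sigma^* \cup \{v_c\}) \setminus \{p,q\}$ is manifestly unchanged because $\{p,q\} = \{q,p\}$ and the identified vertex $v_c$ is the same object in both operations. The vertex membership is likewise symmetric: for $v \neq v_c$ it is inherited unchanged from $\sigma$, while $\sigma_c(v_c) = \sigma(p) \wedge \sigma(q) = \sigma(q) \wedge \sigma(p)$ since $\wedge = \min$ is commutative.

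Next I would treat the edge data, which is the substantive part. Edges not incident at $v_c$, namely those with both endpoints outside $N(p) \cup N(q)$, are copied from $G$ verbatim and so are trivially order-independent. For the edges incident at $v_c$ I would partition the relevant neighbors into three symmetric classes, the vertices adjacent to $p$ only, to $q$ only, and to both, where throughout we exclude $p$ from $N(q)$ and $q$ from $N(p)$ exactly as the definition does; this already shows that if $pq$ is an edge it vanishes in either order. The endpoints $w$ carrying an edge $wv_c$ are governed by $N(p) \cup N(q)$ together with the common part $(N(p)\setminus\{q\}) \cap (N(q)\setminus\{p\})$, both symmetric, so $G/_{pq}$ and $G/_{qp}$ carry the same edge set. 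For the membership of these edges, a common neighbor $u$ receives $\mu_c(uv_c) = \mu(up) \wedge \mu(uq)$, symmetric by commutativity of $\wedge$; an exclusive neighbor of $p$ (resp. of $q$) receives $\mu(up)$ (resp. $\mu(uq)$), and swapping $p \leftrightarrow q$ merely relabels which exclusive class a vertex lies in, assigning it the very same value. Collecting the four verifications yields $G/_{pq} = G/_{qp}$.

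The step I expect to require the most care is the edge-membership bookkeeping, precisely because the case split in Definition~\ref{def1} is written with overlapping and not obviously exhaustive conditions (the clauses ``$u \notin N(q)$'', ``$v \notin N(p)$'', and ``$u \in N(p) \cap N(q)$''). Before invoking symmetry I would first pin down the intended reading of that formula by grouping the neighbors into the exclusive and common classes above, and check that under this reading every edge incident at $v_c$ receives a well-defined value depending only on the symmetric quantities $\mu(up) \wedge \mu(uq)$ in the common case or the single surviving edge weight in the exclusive case. Once the formula is put into this symmetric normal form, commutativity is immediate; the only genuine content is confirming that the definition admits such a normal form.
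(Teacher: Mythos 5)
The paper states this proposition without any proof, so there is no official argument to compare against; the intended content has to be inferred from how the proposition is used, and that is where your attempt runs into trouble. You read commutativity as $G/_{pq} = G/_{qp}$, invariance under transposing the two labels in a single pooling operation. Your verification of that statement is correct, but it is essentially vacuous: $\{p,q\}$ is an unordered pair, and the paper's definition of pooling is written entirely in terms of operations that are symmetric in $p$ and $q$ (namely $\cup$, $\cap$, $\setminus$ and $\wedge$), so there is nothing substantive to check --- as your own ``normal form'' discussion effectively concedes. The paper, however, needs something stronger. Immediately after the proposition it says that ``hence, it is sensible to define the FGP of a set of vertices,'' and defines $G/_P = G/_{v_1v_2}/_{v_2v_3}/\cdots/_{v_{p-1}v_p}$; later, in the proof of the theorem comparing $G/_{\mu_3}$ with $G/_{\mu_4}$, the proposition is explicitly invoked to justify re-ordering an entire sequence of poolings, i.e.\ to conclude $G/_{v_1v_2}/\cdots/_{u_pu_{p+1}} \simeq G/_{u_1u_2}/\cdots/_{v_qv_{q+1}}$. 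So the commutativity actually being asserted is commutativity of \emph{distinct} pooling operations: for two vertex pairs $e_1$ and $e_2$, one must show $(G/_{e_1})/_{e_2} \simeq (G/_{e_2})/_{e_1}$, which is what makes $G/_{\zeta'}$ well defined up to isomorphism for a set $\zeta'$.

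That stronger statement does not follow from your symmetry argument and needs genuinely different work. After pooling $e_1$, the second pooling is performed in a modified graph: the new vertex $v_c$ has a merged neighborhood, and its incident edge values are already minima of values in $G$. One must track how these composite data behave under the second pooling and check that both orders produce the same vertex set, edge set, and membership functions up to the obvious relabeling. This forces a case analysis --- $e_1$ and $e_2$ disjoint with no common neighbors, disjoint with common neighbors, or sharing an endpoint --- and in the shared-endpoint case the pair $e_2$ no longer exists as such in $G/_{e_1}$ (one of its ends has become $v_c$), so even interpreting the reordered expression requires care; the conclusion can then only be an isomorphism, not literal equality. None of this appears in your argument, so as a proof of the proposition in the sense the paper relies on, there is a genuine gap: you have proved the trivial transposition statement rather than the order-independence of successive poolings.
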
\label{prop}
\noindent Hence, it is sensible to define the FGP of a set of vertices. Hereafter, we use the following shorthand:
\begin{definition}\label{def1} 
	Given a f-graph $G: (\sigma, \mu)$ and a path $P = \{v_1-v_2-v_3-\cdots- v_p\}$, we define:
	$$ G/_P = G/_{v_1v_2}/_{v_2v_3}/\cdots /_{v_{p-1}v_{p}}$$
	by naming $v_{i+1}$ as the new vertex obtained by pooling $v_i$ and $v_{i+1}$.
\end{definition}
\noindent The above concept can be generalized for any set of vertices.\medskip

\noindent The vertices to be pooled may change as a result of previous pooling when a collection of vertices is being pooled. The following lemma makes a claim about the vertex that is pooled last if a f-cycle is formed in the f-graph by a group of vertex pooling.
\begin{theorem}
	Let $\mathbf{C} = c_1- c_2- \cdots - c_k-c_1 $ be a f-cycle of length $k\geq 4$ with strength $\phi$ and let $E$ be the set of weakest edges in $\mathbf{C}$, 	w.l.o.g. $E = \{(e_1 = c_1 c_2) ,\cdots, (e_r = c_r c_{r+1} )\}$, i.e. $\mu(e_i) = \phi $. Then,	
	\begin{enumerate}
		\item[i] Pool the other vertex combinations first until we get a f- cycle $\mathbf{C'}$ of edges $\{e_1,e_2,\cdots e_r, e_{r+1}\}$, where $\mu(e_{r+1} ) \geq \phi$.
		\item[ii] Pool vertices of $\mathbf{C'}$ to obtain a single vertex, say $c_c$ of $\sigma(c_c) = \phi$ .
	\end{enumerate} 
\end{theorem}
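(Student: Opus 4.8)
The plan is to reduce both parts to one clean sub-claim: pooling a single edge of a fuzzy cycle of length $m\geq 4$ produces a fuzzy cycle of length $m-1$ in which every surviving edge keeps its original membership value. This follows directly from the pooling definition (Definition~\ref{def1}): the two pooled endpoints $c_i,c_{i+1}$ of an $m$-cycle have neighbour sets $\{c_{i-1},c_{i+1}\}$ and $\{c_i,c_{i+2}\}$, so for $m\geq 4$ they share no common neighbour and only the first two branches of $\mu_c$ are used, each transferring an incident edge to $v_c$ unchanged. I would state this as a lemma and record the complementary fact that the last branch of $\mu_c$ (the one producing a minimum $\mu(e)\wedge\mu(e')$) is triggered for the first time only when the shrinking cycle has become a triangle.

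For part (i) I would use that adjacency of the pooled pair is irrelevant and that FVP is commutative (Proposition~\ref{prop}), so the strong edges (those with $\mu>\phi$) may be pooled first and in any order. In the stated configuration they form the arc $c_{r+1}-c_{r+2}-\cdots-c_k-c_1$ of $k-r$ edges. Applying the shortening lemma $k-r-1$ times --- legitimate because the cycle stays of length $\geq 4$ throughout, using $k\geq 4$ --- collapses this arc to a single surviving edge $e_{r+1}$ between its two endpoints, which still carries a strong value, so $\mu(e_{r+1})>\phi$, in particular $\geq\phi$; the weak edges $e_1,\dots,e_r$ are untouched and stay at value $\phi$. This is exactly the cycle $\mathbf{C}'$ of length $r+1$ on $\{e_1,\dots,e_r,e_{r+1}\}$. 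If the weakest edges are not consecutive one collapses each maximal strong arc separately, and the degenerate all-weak case explains why the bound is written $\geq\phi$ rather than strict.

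For part (ii) I would keep pooling the edges of $\mathbf{C}'$ down to a triangle, again by the shortening lemma, so that the triangle consists of the edge $e_{r+1}$ together with two edges of value $\phi$. The decisive step is the triangle collapse: here the two pooled vertices share the third vertex as a common neighbour, the last branch of $\mu_c$ fires, and the single surviving edge receives a value $\mu(e)\wedge\mu(e')$ taken over two triangle edges at least one of which equals $\phi$; since $\mu(e_{r+1})\geq\phi$ this minimum is exactly $\phi$. Hence one edge of value $\phi$ remains, and the final pooling identifies its endpoints into $c_c$. Since every pooling assigns the new vertex the minimum of the two old vertex memberships, $\sigma(c_c)=\bigwedge_{i=1}^{k}\sigma(c_i)$, and the axiom $\mu(pq)\leq\sigma(p)\wedge\sigma(q)$ gives $\sigma(c_c)\geq\phi$ immediately.

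The main obstacle will be the reverse inequality $\sigma(c_c)\leq\phi$: the identity $\sigma(c_c)=\bigwedge_i\sigma(c_i)$ only bounds $\sigma(c_c)$ below by $\phi$, so to pin it down to $\phi$ one must argue that the minimum vertex membership around the cycle actually equals the weakest-edge value --- this is the one place the argument needs genuine input from the weakest-edge structure rather than pure bookkeeping, and I would look for a hypothesis (or an intended convention on $\mathbf{C}$) that forces some endpoint of a $\phi$-edge to realise the minimum membership. Secondary care is needed to confirm that no common-neighbour minimum is silently triggered before the triangle stage --- precisely the content of the shortening lemma and the reason $k\geq 4$ is assumed --- and that commutativity really licenses collapsing the maximal strong arcs independently before the weak part is touched.
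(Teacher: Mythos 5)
Your proposal follows essentially the same route as the paper's proof: the paper argues by induction on $k$, pooling one edge of the cycle at a time, which is exactly your shortening lemma in inductive clothing. In both arguments the key observation is that for a cycle of length at least $4$ the two pooled endpoints have no common neighbour, so incident edge values pass to the new vertex unchanged, and the minimum $\mu(e)\wedge\mu(e')$ fires only at the triangle stage. If anything your version is tidier: the paper's base case $k=4$ writes the first pooling as a single new edge with value $\wedge\{\mu(c_2c_3),\mu(c_4c_1)\}$, whereas Definition \ref{def1} actually produces two separate edges with unchanged values (the common-neighbour branch is not yet triggered), and the paper's inductive step never cleanly separates the collapse of the strong arc (your part (i)) from the collapse of $\mathbf{C}'$ (your part (ii)), nor does it discuss non-consecutive weakest edges, which you do via commutativity.

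More importantly, the obstacle you flag at the end is a genuine gap in the paper's own proof, not a defect of your argument. The paper concludes with the bare assertion that the final vertex has membership value $\phi$; but by Definition \ref{def1} each pooling assigns the new vertex $\sigma(p)\wedge\sigma(q)$, so the terminal vertex carries $\bigwedge_{i}\sigma(c_i)$, and the axiom $\mu(pq)\leq\sigma(p)\wedge\sigma(q)$ only yields $\bigwedge_{i}\sigma(c_i)\geq\phi$. If every $\sigma(c_i)=1$ while the weakest edge has $\mu=\phi<1$, the pooled vertex has membership $1\neq\phi$, so the equality $\sigma(c_c)=\phi$ in part (ii) is false as stated; it needs either an extra hypothesis forcing some vertex of $\mathbf{C}$ to attain membership $\phi$, or a reinterpretation of $\sigma(c_c)$ as the value of the last surviving edge (which your triangle-collapse step correctly computes to be exactly $\phi$). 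You were right to isolate this as the one step that cannot be recovered from bookkeeping, and the paper does not supply the missing input either.
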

\begin{proof}
	We prove this by induction on $k$. For $k = 4$, we have $\mathbf{C}  = {c_1 -c_2- c_3-c_4-c_1}$. Without loss of generality, $E =\{ (e_1 = {c_1 c_2 }),( e_2 = {c_2 c_3 })\}$, $\mu(e_1) = \mu(e_2) = \phi$. Let $e_3 = c_3 c_4$ and $e_4 = c_4 c_1$ be the other vertices with $\mu(e_3),$ $ \mu(e_4) > \phi.$\medskip
	
	\noindent Let $C_1 = C/_{c_3c_4}$ is a cycle with length 3 and we name new vertex as $c_3$ and new edge $c_3c_1$ with $\mu(c_3c_1) = \wedge \{\mu(c_2 c_3), \mu(c_4 c_1)\} \geq \phi $. Let $C_2 = C/_{c_3c_1}$ is an edge and we name new vertex as $c_1$ and new edge $c_2c_1$ with $\mu(c_2c_1) = \wedge \{\mu(c_1 c_3), \mu(c_2 c_3)\} = \phi$. Pooling this edge $c_1 c_2$ is a single vertex $C_3 = C/_{c_1c_2}$ with membership value $\phi$. 
	
	\begin{figure}
		\centering
		\includegraphics[width=8cm, height=1.8 cm]{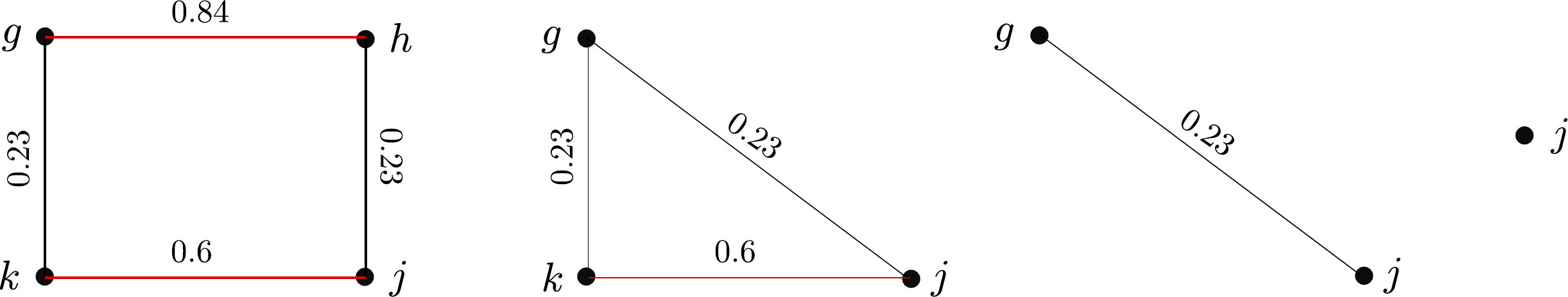}
		\caption{ FGP of a f- cycle $C$ with 4 vertices to $C/_{gh}$, then to $C/_{gh}/_{kj}$ to $C/_{gh}/_{kj}/_{gj}$ to a vertex $j$ }
		\label{cyc}
	\end{figure}
	\noindent Hence, we assume the lemma holds for f- cycles of length $k$, and we will show it also holds for f- cycles of length $k + 1$. Let be given a f- cycle $\mathbf{C}$- $c_1- c_2- \cdots - c_k-c_{k+1}-c_1 $ of length $k + 1$. $C/_{c_kc_{k+1}}$, is a f- cycle obtained by Pooling $c_{k} c_{k+1}$ introduces a new
	vertex a $c_k$ and edge $c_{k}c_{1}$.  Hence, this results in a f- cycle $C/_{c_kc_{k+1}}$ of length $k$, for which we know that the lemma holds ( Figure \ref{cyc} is an illustration).
\end{proof}
\begin{theorem} 
	Let  $G: (\sigma, \mu)$ be given a f-graph and a set of edges $\mu_1= \{f_1 ,\cdots, f_q \}$ and a set of edges $\mu_2 = \{g_1 ,\cdots, g_p \}$, $\mu_1 \cap \mu_2 = \emptyset$. Let $\mu_2$ form a f-cycle $g_1-\cdots- g_p$ with $p\geq 3$. Let be $\mu_3 = \{f_1 ,\cdots, f_q,g_1 ,\cdots, g_p\}$ and $\mu_3 = \{g_1 ,\cdots, g_p,f_1 ,\cdots, f_q\}$, then $$G/_{\mu_3} \text{ isomorphic to  } G/_{\mu_4}.$$
\end{theorem}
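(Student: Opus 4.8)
The plan is to reduce the statement to the commutativity of a single elementary pooling, already guaranteed by the commutativity Proposition above, together with the observation that $G/_{\mu_3}$ and $G/_{\mu_4}$ are assembled from the \emph{same} collection of elementary pooling operations — pooling each edge of $\mu_1\cup\mu_2$ exactly once — merely performed in a different order. (I read the intended hypotheses as $\mu_3=\{f_1,\dots,f_q,g_1,\dots,g_p\}$ and $\mu_4=\{g_1,\dots,g_p,f_1,\dots,f_q\}$, so that $G/_{\mu_3}$ pools the edges of $\mu_1$ before those of the cycle $\mu_2$, while $G/_{\mu_4}$ reverses this order.) Thus the content is genuinely an order-independence claim rather than anything special about the cyclic structure.

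First I would isolate the only structural fact that pooling uses: when two vertices $p,q$ are identified, both the membership of the new vertex, $\sigma(p)\wedge\sigma(q)$, and the membership inherited by an edge from a common neighbor, $\mu(up)\wedge\mu(uq)$, are computed with $\wedge$. Since $\wedge$ is associative, commutative, and idempotent, the value eventually attached to any merged super-vertex is the infimum of the $\sigma$-values of its constituents, and the value on any surviving edge between two super-vertices is the infimum of the $\mu$-values of the original edges running between the two constituent sets. Consequently the pooled f-graph $G/_\mu$ depends, up to isomorphism, only on the partition of $\sigma^*$ into the connected components of the spanning subgraph $(\sigma^*,\mu)$, and not on any ordering of the individual poolings.

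With this invariant in hand the conclusion is immediate. The edge set used by either order is exactly $\mu_1\cup\mu_2$, so both orders induce the \emph{same} partition of $\sigma^*$: the cycle $g_1-\cdots-g_p$ collapses all of its $p$ vertices into a single block (the last cycle edge becoming a vanishing self-loop, exactly as in the previous theorem), while the $\mu_1$-edges fuse their endpoints as usual. Since the blocks and their $\wedge$-aggregated vertex and edge memberships coincide, the two results differ at most in the names assigned to the new vertices, i.e.\ by a relabeling, and hence $G/_{\mu_3}\cong G/_{\mu_4}$. Alternatively, and equivalently, one can pass from the sequence defining $\mu_3$ to the one defining $\mu_4$ by a finite number of adjacent transpositions of consecutive elementary poolings, each of which preserves the isomorphism type by the commutativity Proposition.

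The step I expect to be the main obstacle is making the bookkeeping rigorous when $\mu_1$ and $\mu_2$ share vertices: after the first batch of poolings, the edges of the second batch no longer refer to their original endpoints but to the newly created super-vertices, so the phrase ``the same operation'' must be justified through the renaming convention of the pooling definition. The cleanest way around this is to avoid sequential renaming altogether and instead prove the partition-invariance statement of the second paragraph once and for all; then neither the ordering nor the disappearing self-loop of the collapsed cycle needs separate handling, and the isomorphism follows purely from equality of the induced partitions and of the $\wedge$-aggregated memberships.
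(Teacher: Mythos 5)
Your proposal is correct, and its primary argument takes a genuinely different route from the paper's. The paper proves the theorem by what you offer only as your ``alternative'': it invokes the commutativity Proposition and rewrites $G/_{\mu_3}=G/_{v_1v_2}/_{v_2v_3}/\cdots/_{v_qv_{q+1}}/\cdots$ as a rearranged chain ending in $\simeq G/_{\mu_4}$, i.e.\ order-independence by repeated pairwise swaps, with all renaming conventions left implicit (indeed its chain even inserts a pooling $/_{v_{q+1}u_1}$ of an edge belonging to neither $\mu_1$ nor $\mu_2$, a symptom of exactly the bookkeeping problem you flag). Your main route instead proves an invariant: because every membership produced by pooling is a $\wedge$-aggregation, the pooled graph depends, up to isomorphism, only on the partition of $\sigma^*$ into connected components of the spanning subgraph formed by the pooled edges, together with the $\wedge$-aggregated vertex and edge values. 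This is stronger and cleaner than what the paper does: it disposes of the renaming issue once and for all, it shows the cycle hypothesis $p\geq 3$ plays no role in the conclusion, and it yields order-independence for arbitrary permutations of arbitrary edge sets, not just the two listed orders. What the paper's route buys is brevity and the appearance of relying only on the stated Proposition; what yours buys is an actual justification of the step that Proposition alone does not deliver, namely that ``the same'' elementary pooling still makes sense after earlier poolings have replaced its endpoints by new vertices.
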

\begin{proof}
	Let, $f_i = v_iv_{i+1}$ and  $g_i = u_iu_{i+1}.$ From Proposition \ref{prop}, we know FGP are commutative. In light of this, we opt to pool the vertices in the following order: $v_1 ,v_2, \cdots v_q,v_{q+1},u_1,u_2,\cdots ,u_p,u_{p+1}$. Then we have by definition 
	\begin{eqnarray*}
		G/_{\mu_3}&=& G/_{v_1v_2}/_{v_2v_3}/ \cdots/_{v_qv_{q+1}}/_{v_{q+1}u_1}{u_1u_2}/\cdots /_{u_pu_{p+1}} \\
		&=& G/_{v_1v_2}/_{v_2v_3}/ \cdots/_{v_qv_{q+1}}/_{v_c}{u_1u_2}/\cdots /_{u_pu_{p+1}} \\
		&& \text{ ($v_c$ is the new vertex after pooling $v_{q+1}u_1$ )}\\
		&=&G/_{v_1v_2}/_{v_2v_3}/ \cdots/_{v_qv_{q+1}}/_{u_1v_{q+1}}{u_1u_2}/\cdots /_{u_pu_{p+1}} \\
		&& \text{ (by Proposition \ref{prop} and continue the above steps)}\\
		&\simeq& G/_{u_1u_2}/\cdots /_{u_pu_{p+1}/_{u_{p+1}v_1}/_ {v_1v_2}/_{v_2v_3}/ \cdots/_{v_qv_{q+1}}} \\
		&\simeq& G/_{\mu_4}
	\end{eqnarray*}
\end{proof}
\begin{definition}\label{vset} 
	A  FGP set $\zeta'$ in a f-graph $G: (\sigma, \mu)$ is a set of vertices  $v' \subseteq \sigma^*$, such that spanning fuzzy forest induced by $\zeta'$. A   FGP $C$ of $G$ is a f- graph such that there exists a  FGP set $\zeta'$ with $C = G/_{\zeta'}$.
\end{definition}
\noindent After the previous observations, we develop another view on fuzzy vertex pooling a set $\zeta'$ of vertices. The
graph $G/_{\zeta'}$ is composed of connected components $Q_1, Q_2, \cdots Q_k$. 
\begin{theorem} 
	Let be given a  f-graph $G: (\sigma, \mu)$, $v \in \sigma^*$ and $e \in \mu^*$. Then,
	\begin{eqnarray*}
		d_{s_{G}}(v) \geq d_{s_{G/e}} (v) 		 
	\end{eqnarray*}	
	\begin{proof}
		We prove this by considering distinct cases on $e = pq$.
		\begin{description}
			\item [Case I:] $v$  is not an end vertex of  $e$ and $v\in N[p]\vee N[q]$. 
			\begin{eqnarray*}
				d_{s_{G}}(v) &=&\sum_{w\in N_s(v)}\mu(wv)\\
				&=& \sum_{w\in N_{s_{G}}(v)-\{p,q\}}\mu(wv) + \mu(pv)+\mu(qv)\\
				&\geq& \sum_{w\in N_{s_{G/e}}(v)\}}\mu(wv) \\
				&\geq& d_{s_{G/e}} (v) 		 
			\end{eqnarray*}	
			
			\item [Case II:]  $v$  is not an end vertex of  $e$ and $v\not\in N[p]\vee N[q]$. Then, clearly	$d_{s_{G}}(v) = d_{s_{G/e}} (v) $
			\item [Case III:]  $v$  is  an end vertex of  $e$. Say $v=q$
			\begin{eqnarray*}
				d_{s_{G}}(v) &=&\sum_{w\in N_s(v)}\mu(wv)\\
				&=& \sum_{w\in N_s(v)-\{p\}}\mu(wv) + \mu(pq)\\
				&\geq& \sum_{w\in N_{s_{G/e}}(v)}\mu(wv) \\
				&\geq& d_{s_{G/e}} (v) 		 
			\end{eqnarray*}	
		\end{description}
	\end{proof}
\end{theorem}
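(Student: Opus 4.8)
The plan is to fix the pooled edge $e = pq$, write $v_c$ for the vertex obtained by identifying $p$ and $q$, and prove the inequality by the trichotomy the statement suggests: (II) $v \notin \{p,q\}$ with $v$ adjacent to neither $p$ nor $q$; (I) $v \notin \{p,q\}$ but adjacent to $p$ and/or $q$; and (III) $v$ an endpoint of $e$. Before splitting cases I would isolate the one structural fact that makes each case routine: by the pooling rule of Definition \ref{def1}, pooling never raises the membership of a surviving incidence. A common neighbour $w \in N(p)\cap N(q)$ yields a single edge $wv_c$ with $\mu_c(wv_c)=\mu(wp)\wedge\mu(wq)$, which is bounded above by each of $\mu(wp)$ and $\mu(wq)$; an edge to a private neighbour retains its value; and the edge $pq$ is deleted outright. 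Hence, at the level of raw memberships, the weight incident to the image of $v$ can only stay the same or drop, and at an endpoint it drops by at least $\mu(pq)$.

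With this in hand the three cases reduce to displayed inequality chains. In Case II nothing incident to $v$ is altered, so $d_{s_G}(v)=d_{s_{G/e}}(v)$. In Case III, taking $v=q$, the sum $\sum_{w\in N_s(q)}\mu(wq)$ loses the term $\mu(pq)$ and each remaining term is replaced by one no larger, giving $d_{s_G}(q)\ge d_{s_{G/e}}(v_c)$. Case I is the genuine merging situation: if $v$ is a common neighbour of $p$ and $q$ its two incident edges $vp,vq$ collapse to the single edge $vv_c$ of weight $\mu(vp)\wedge\mu(vq)$, and peeling these off the sum while bounding the merged weight by $\mu(vp)+\mu(vq)$ yields the inequality; if $v$ is adjacent to exactly one endpoint the corresponding edge simply persists with unchanged value. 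Each case is then written as a chain $d_{s_G}(v)=\cdots\ge\cdots\ge d_{s_{G/e}}(v)$ of the form already indicated.

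The genuine obstacle is the adjective \emph{strong}: $d_s$ sums only over edges $xy$ with $\mu(xy)\ge CONN_{G-xy}(x,y)$, and since pooling rewires the whole graph, the $\alpha/\beta/\delta$ classification of an edge at $v$ can change. In particular I cannot simply assert, as the membership bookkeeping would suggest, that the strong-neighbour set shrinks: a $\delta$-edge that was uncounted in $d_{s_G}(v)$ might in principle be promoted to strong in $G/e$ and introduce a new positive term. The cleanest way I see to close this gap is to build a weight-non-increasing injection from the strong incidences of $v$ in $G/e$ into those of $v$ in $G$ — mapping $vv_c$ back to whichever of $vp,vq$ realises the minimum and every other strong edge to itself — and then verify that each image is strong in $G$ with membership at least that of its preimage, so that $d_{s_{G/e}}(v)\le d_{s_G}(v)$. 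Verifying that no genuinely new strong edge appears — equivalently, controlling how $CONN_{G/e-xy}(x,y)$ compares with $CONN_{G-xy}(x,y)$ under the minimum rule for merged parallel edges — is the step I expect to demand the most care, since merging via $\wedge$ lowers the strength of competing paths and thereby threatens to create new strong incidences. I would therefore spend the bulk of the argument establishing this connectivity comparison, after which all three case-chains collapse to the stated inequality.
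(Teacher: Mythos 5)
Your case split and membership bookkeeping mirror the paper's own proof almost exactly, and your preliminary observation (pooling never raises a surviving incidence and deletes $pq$ outright) is precisely what the paper's displayed chains of inequalities rely on. The one real difference is that you explicitly flag what the paper passes over in silence: $d_s$ sums only over \emph{strong} incidences, and pooling can change the $\alpha/\beta/\delta$ classification of edges at $v$, so one must rule out a $\delta$-edge of $G$ being promoted to a strong edge of $G/e$. Your plan defers everything to that verification --- the weight-non-increasing injection together with the comparison of $CONN_{G/e-xy}(x,y)$ against $CONN_{G-xy}(x,y)$ --- and you never carry it out, so as a proof the proposal is incomplete at exactly the step you yourself identify as the crux.

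Worse, that step cannot be carried out, because the promotion you worry about genuinely occurs and can overturn the inequality. Take $\sigma\equiv 1$ on $\{v,w,u,p,q\}$ with $\mu(vw)=0.5$, $\mu(wp)=\mu(pu)=\mu(uv)=\mu(pq)=0.9$, and $\mu(uq)=0.1$. In $G$ the edge $vw$ is a $\delta$-edge, since $CONN_{G-vw}(v,w)=0.9$ via the path $v-u-p-w$, so $d_{s_{G}}(v)=\mu(uv)=0.9$. Now pool $e=pq$: $w$ is a private neighbour of $p$, so $\mu_c(wv_c)=0.9$, while $u$ is a common neighbour of $p$ and $q$, so $\mu_c(uv_c)=0.9\wedge 0.1=0.1$. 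In the resulting $4$-cycle $v-w-v_c-u-v$, the only $v$--$w$ path avoiding $vw$ has strength $0.1<0.5$, so $vw$ has become $\alpha$-strong, and $d_{s_{G/e}}(v)=0.5+0.9=1.4>0.9=d_{s_{G}}(v)$. Note that $v$ is adjacent to neither $p$ nor $q$, so this is Case II, where the paper asserts equality ``clearly''; that assertion is false, and the same example defeats your injection, since the strong incidence $vw$ of $G/e$ maps to itself but is not strong in $G$. The conclusion is that the theorem as stated is false under the paper's definitions, so neither your strategy nor the paper's argument can be completed; what survives is only the weaker, purely membership-level statement (the sum of $\mu_c$ over all incidences at $v$ never exceeds the sum of $\mu$ over all incidences at $v$), or the stated inequality under an extra hypothesis that blocks the drop in connectivity caused by the $\wedge$ rule at common neighbours.
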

Now, we will prove that FGP in CFG is an isomorphism and state two theorems of FGP in f-tree and f-cycle. Later we define three types of FGP.
\begin{theorem}\label{cfg1}
	Let be given a $G: (\sigma, \mu)$  with $|\sigma* | = n$, $G/{pq} :\sigma_c, \mu_c)$ is CFG for $e= pq $ $\in \mu^*$
\end{theorem}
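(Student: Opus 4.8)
The plan is to read the statement as asserting that the class of complete fuzzy graphs (CFGs) is closed under a single fuzzy vertex pooling: if $G$ is a CFG on $n$ vertices then $G/_{pq}$ is a CFG on $n-1$ vertices for every $e=pq\in\mu^*$. Recall that $G$ being a CFG means $\mu(uw)=\sigma(u)\wedge\sigma(w)$ for every pair $u,w\in\sigma^*$; in particular every two vertices are adjacent, so $p,q$ are joined and, decisively, every surviving vertex $u\in\sigma^*\setminus\{p,q\}$ belongs to $N(p)\cap N(q)$. This single observation is the engine of the proof: it guarantees that pooling leaves no vertex isolated and that the new vertex $v_c$ is adjacent to all of $\sigma_c^*\setminus\{v_c\}$, so $G/_{pq}$ is at least complete as an adjacency structure.

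With adjacency settled, I would verify the membership identity $\mu_c=\sigma_c\wedge\sigma_c$ on $G/_{pq}$ case by case against the pooling definition. For a pair $u,w$ with $u,w\neq p,q$, pooling leaves the edge untouched, hence $\mu_c(uw)=\mu(uw)=\sigma(u)\wedge\sigma(w)=\sigma_c(u)\wedge\sigma_c(w)$. For an edge incident to $v_c$, completeness places every $u$ in the branch $\mu_c(uv_c)=\mu(up)\wedge\mu(uq)$, and substituting the CFG values gives
\[
\mu_c(uv_c)=\bigl(\sigma(u)\wedge\sigma(p)\bigr)\wedge\bigl(\sigma(u)\wedge\sigma(q)\bigr)=\sigma(u)\wedge\bigl(\sigma(p)\wedge\sigma(q)\bigr)=\sigma_c(u)\wedge\sigma_c(v_c),
\]
using idempotence, commutativity and associativity of $\wedge$ together with $\sigma_c(v_c)=\sigma(p)\wedge\sigma(q)$. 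Since every pair of vertices of $G/_{pq}$ is then adjacent with membership equal to the minimum of its endpoints, $G/_{pq}$ is a CFG, which completes the argument.

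The main obstacle I expect is not the algebra but the careful bookkeeping of the pooling definition. One must confirm that completeness forces every surviving vertex into the common-neighbor branch $\mu_c(uv_c)=\mu(up)\wedge\mu(uq)$, so that the remaining branches of the definition—written for vertices adjacent to only one of $p,q$—genuinely never occur; otherwise the membership values of some new edges would not be pinned down. I would also flag the degenerate case $\sigma^*=\{p,q\}$, where $G/_{pq}$ collapses to a single vertex and is vacuously a CFG, and I would double-check that the completeness hypothesis on $G$ (implicit in the section heading but dropped from the displayed statement) is indeed what the theorem intends, since without it $G/_{pq}$ need not be complete.
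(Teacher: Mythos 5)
Your proposal is correct and takes essentially the same approach as the paper: both arguments verify directly from the pooling definition that every edge of $G/_{pq}$ has membership equal to the minimum of its endpoint memberships, the key point being that in a CFG every surviving vertex $u$ is a common neighbor of $p$ and $q$, so $\mu_c(uv_c)=\mu(up)\wedge\mu(uq)=\sigma(u)\wedge\sigma(p)\wedge\sigma(q)=\sigma_c(u)\wedge\sigma_c(v_c)$. The paper performs the identical computation after first sorting the vertices by membership value and writing the new edge values case-wise by index (with an apparent typo, $\sigma(p_1)$ for $\sigma(p_i)$), whereas your lattice-algebra formulation avoids the sorting and additionally makes explicit the implicit hypothesis that $G$ is complete; these are presentational, not substantive, differences.
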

\begin{proof}
	Let $\sigma^* = \{p_1,p_2,\cdots, p_n\}$ be the $n$ vertices with, $\sigma(p_1) \leq \sigma(p_2) \leq \cdots \leq \sigma(p_n)$. Choose any edge $p_jp_k$.\medskip
	
	\noindent Then $G/_{p_jp_k}$ is a f-graph with $n-1$ vertices. Name new vertex as $p'$ with $\sigma(p') = \sigma(p_j)\wedge \sigma(p_k)$.
	$$
	\mu(p_ip')= \left\{\begin{array}{ll}
		\sigma(p_1) &  \text{ if } i < \min\{j,k\}\\
		\wedge(\sigma(p_j), \sigma(p_k))& \text{ else }
	\end{array}\right.
	$$
	Here that $G/_{p_jp_k}$ is a CFG ( Figure \ref{cfg} is an illustration).
	\begin{figure}
		\centering
		\includegraphics[width=6cm, height=6cm]{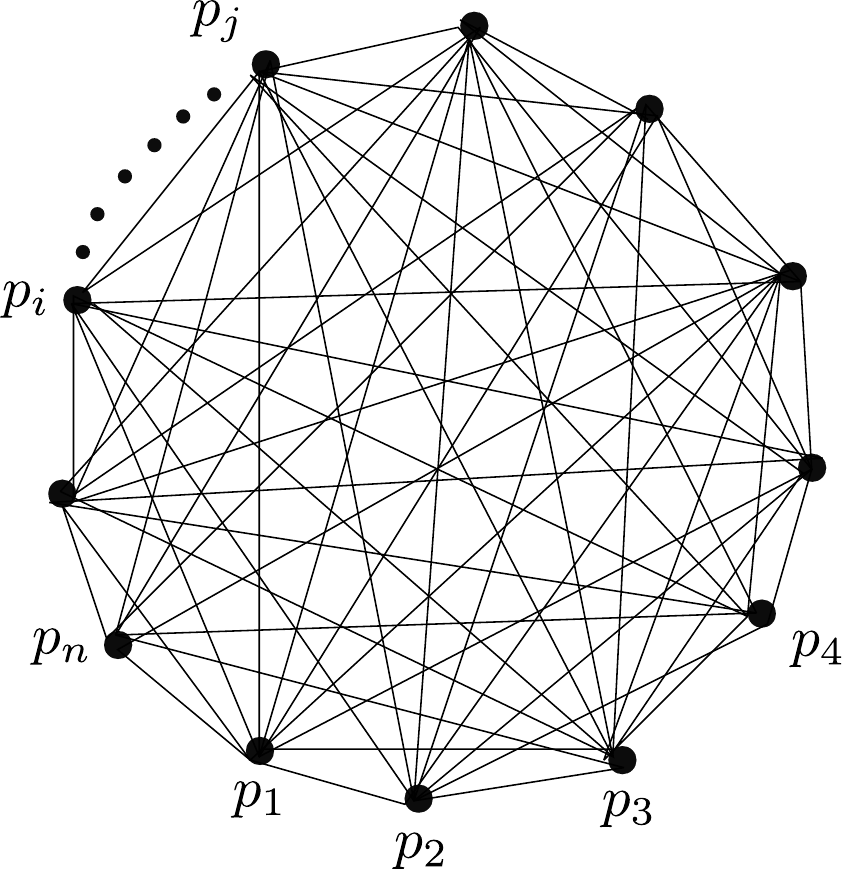}
		\caption{  FGP of a complete f-graph}
		\label{cfg}
	\end{figure}
\end{proof}
\begin{theorem}
	Let be given a CFG $G: (\sigma, \mu)$ and weakest edges $pq, pz \in \mu^*$.  Then $G/{pq} \approx G/{pz}.$
\end{theorem}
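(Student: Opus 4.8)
The plan is to exploit the rigidity of complete fuzzy graphs. In a CFG every pair of vertices is joined and each edge weight is forced to be $\mu(uv)=\sigma(u)\wedge\sigma(v)$, so a CFG on a fixed number of vertices is determined, up to isomorphism, by the multiset of its vertex membership values alone. By Theorem \ref{cfg1} both $G/_{pq}$ and $G/_{pz}$ are again complete fuzzy graphs, each on $n-1$ vertices; hence the whole problem reduces to checking that these two pooled graphs carry the same multiset of vertex memberships. Once that is established, any weight-preserving bijection of vertex sets is automatically a fuzzy-graph isomorphism, since completeness leaves no freedom in placing or weighting edges.

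First I would record the effect of a single pooling on the vertex-weight multiset. Write $S=\{\sigma(v):v\in\sigma^*\}$ and let $\phi$ denote the minimum edge membership of $G$. The hypothesis that $pq$ and $pz$ are weakest edges gives $\sigma(p)\wedge\sigma(q)=\sigma(p)\wedge\sigma(z)=\phi$. By Definition \ref{def1} the new vertex created from $p,q$ receives membership $\sigma(p)\wedge\sigma(q)=\phi$, while every surviving vertex keeps its weight; so the multiset attached to $G/_{pq}$ is $(S\setminus\{\sigma(p),\sigma(q)\})\cup\{\phi\}$, and symmetrically the multiset of $G/_{pz}$ is $(S\setminus\{\sigma(p),\sigma(z)\})\cup\{\phi\}$. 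Moreover, because $\phi\leq\sigma(w)$ for every $w$, the pooled vertex is joined to all remaining vertices with edge weight exactly $\phi$ in both graphs, which is the structural feature the eventual isomorphism must respect.

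Next I would compare the two multisets. In the clean case $\sigma(p)>\phi$, the weakest-edge condition forces $\sigma(q)=\sigma(z)=\phi$, so both multisets collapse to $S\setminus\{\sigma(p)\}$ and coincide outright; the explicit isomorphism then fixes each common vertex, sends the pooled vertex of $G/_{pq}$ to the pooled vertex of $G/_{pz}$ (both of weight $\phi$), and matches the residual weight-$\phi$ vertices $z$ and $q$. The final verification that edges are preserved is immediate from $\mu_c(uv)=\sigma_c(u)\wedge\sigma_c(v)$, giving $G/_{pq}\approx G/_{pz}$.

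The main obstacle I anticipate lies entirely in the degenerate case where $p$ itself is a vertex of minimum membership, i.e. $\sigma(p)=\phi$. There the removal of $\sigma(p)$ cancels against the re-inserted $\phi$, and each multiset reduces to $S$ with a single copy of $\sigma(q)$, respectively $\sigma(z)$, deleted; equality of the two then amounts precisely to $\sigma(q)=\sigma(z)$, which is not automatic once $q$ and $z$ are allowed to differ. Handling this case honestly requires reading the word \emph{weakest} sharply, for instance arguing that the two weakest edges at the common vertex $p$ must terminate in vertices of equal (minimum) membership, or else augmenting the statement with the hypothesis that $q$ and $z$ are themselves vertices of minimum weight. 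Everything downstream of ``the multisets agree'' is mechanical, so this multiset-matching step is the crux on which the proof stands or falls.
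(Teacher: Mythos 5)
Your proposal has the same skeleton as the paper's proof --- both invoke Theorem \ref{cfg1} to conclude that $G/_{pq}$ and $G/_{pz}$ are complete fuzzy graphs on $n-1$ vertices --- but you go one essential step further, and that step is exactly where the two arguments part ways. The paper's entire proof is the inference ``same number of vertices and both complete, therefore isomorphic,'' which you correctly refuse to accept: a fuzzy-graph isomorphism must preserve the membership values $\sigma$ and $\mu$, so two CFGs of the same order are isomorphic only if their vertex-membership multisets coincide. Your reduction to comparing multisets, and your clean-case computation showing that both multisets equal $S\setminus\{\sigma(p)\}$ when $\sigma(p)>\phi$, is the argument the paper should have given but did not.

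Moreover, the degenerate case you flag is not a defect of your write-up; it is a genuine counterexample to the theorem as stated, which the paper's proof silently glosses over. Take $\sigma^*=\{a,b,c,d\}$ with $\sigma(a)=0.1$, $\sigma(b)=0.5$, $\sigma(c)=\sigma(d)=0.9$ and $\mu(uv)=\sigma(u)\wedge\sigma(v)$ for every pair. The weakest edges are those incident to $a$, so $ab$ and $ac$ are both weakest (weight $0.1$). Yet $G/_{ab}$ has vertex memberships $\{0.1,\,0.9,\,0.9\}$ and edge weights $\{0.1,\,0.1,\,0.9\}$, while $G/_{ac}$ has vertex memberships $\{0.1,\,0.5,\,0.9\}$ and edge weights $\{0.1,\,0.1,\,0.5\}$; no membership-preserving bijection can exist. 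So your instinct about the crux is exactly right: the statement is only salvageable under a sharper reading of ``weakest'' (for instance requiring $q$ and $z$ to be vertices of minimum membership, as you suggest) or by weakening $\approx$ to isomorphism of the underlying crisp graphs, which is implicitly all the paper's vertex-counting argument establishes. In short, your proposal is stronger than the paper's own proof: it proves the theorem where it is true and detects precisely why it fails where it is false.
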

\begin{proof}
	Let $G/{pq} = (\sigma_1,\mu_1) \text{ and } G/{pz} = (\sigma_2,\mu_2)$. 
	Looking at definition \ref{def1} and Theorem \ref{cfg1}, we see that $|\sigma_1^*| = |\sigma_2^*|$. And $G/{pq} \text{ and } G/{pz}$ is a complete f-graph with $|\sigma^* - 1|$ vertices.  Therefore, $G/{pq}$ is isomorphic to $G/{pz}$ .
\end{proof}
In a f- tree $G: (\sigma, \mu)$  with $|\sigma* | = n$,
$G/_{pq}$ need not be a f- tree if $pq$ is a $\delta $ edge.
$G/_{pq}$  is a f- tree if $pq$ is an $\alpha$ edge.
In a f- cycle $G: (\sigma, \mu)$  with $|\sigma* | = n$. Let $e= pq \in \mu^*$ is an $\alpha$ edge.  FGP of $G$ 	satisfies the following relationship:
if $p$ or $q$ is a cutvertex then $G/_{pq}$  is a f- cycle else $G/_{pq}$  is a f- tree.
\noindent Thus, how to implement a pooling remains a conundrum. Next,  we introduce three kinds of FGP.
F- cycle pooling (FCP): Identifying f- cycle in a f-graph and pooling f- cycle into a single vertex.
F- block pooling (FBP): Identifying f- block in a f-graph and pooling f- block into a single vertex and CFG pooling (CFGP): Identifying complete fuzzy sub graph in a f-graph and pooling complete f- subgraph into a single vertex.\medskip

$\Pi$, a criteria for FGP if, for any fuzzy graph $G: (\sigma, \mu)$, all vertex pooling or subgraph pooling of $G$ satisfy $\Pi$. $\Pi$ is nontrivial on connected fuzzy graphs  if it is true for connected fuzzy graphs  and false for  many connected f-graphs. The criteria $\Pi$ is determined by the f-graph if, for any f-graph $G$, $G$ 
satisfies $\Pi$ if and only if its fuzzy subgraphs satisfies $\Pi$. $\Pi$  is determined by all of its components if, for any f - graph $G$, $G$ satisfies $\Pi$ if and only if all connetced components of G satisfy $\Pi$ (Figure \ref{fpcri}). \medskip

\noindent
\hfill 
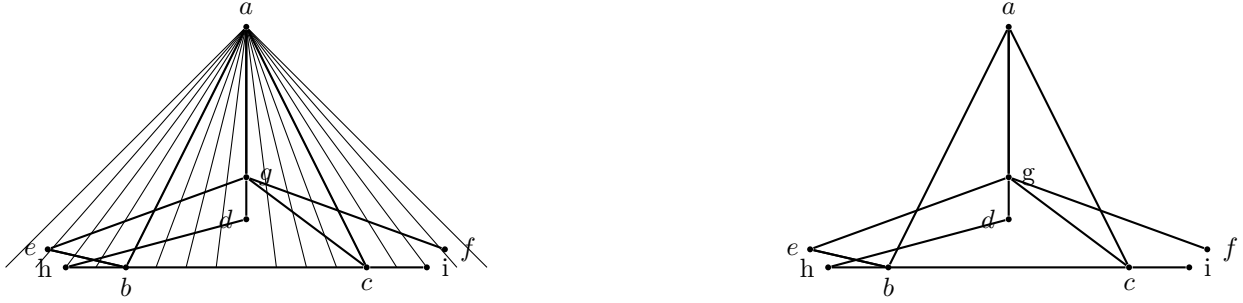
\begin{figure}
    \centering

\begin{tikzpicture}[scale=.8, every node/.style={circle, fill, inner sep=.8pt}, label distance=.8pt]

\node (a) at (0, 4) [label=above:$a$] {};
\node (b) at (-2, 0) [label=below:$b$] {};
\node (c) at (2, 0) [label=below:$c$] {};
\node (g) at (0, 1.5) [label=right:$g$] {};
\node (d) at (0, 0.8) [label=left:$d$] {};


\node (h) at (-3, 0) [label=left:h] {};
\node (e) at (-3.3, 0.3) [label=left:$e$] {};
\node (i) at (3, 0) [label=right:i] {};
\node (f) at (3.3, 0.3) [label=right:$f$] {};

\foreach \i in {-2, -1.5, ..., 1.5} {
    \draw (a) -- ($(b) + (\i, 0)$);
    \draw (a) -- ($(c) + (-\i, 0)$);
}

\draw [thick] (a)--(g);
\draw  [ thick](a) -- (b);
\draw [ thick] (a) -- (c);
\draw [ thick] (a) -- (g) -- (c);
\draw [thick] (g)--(f);
\draw [ thick] (g)--(e);
\draw [ thick] (g)--(d);
\draw [ thick] (d)--(h);

\draw [ thick] (b) -- (e);
\draw [ thick] (b) -- (e);
\draw [ thick] (c) -- (i);
\draw [ thick] (c) -- (h);

\end{tikzpicture}
\hfill
\begin{tikzpicture}[scale=0.8, every node/.style={circle, fill, inner sep=.8pt}, label distance=.8pt]

\node (a) at (0, 4) [label=above:$a$] {};
\node (b) at (-2, 0) [label=below:$b$] {};
\node (c) at (2, 0) [label=below:$c$] {};
\node (g) at (0, 1.5) [label=right:g] {};
\node (d) at (0, 0.8) [label=left:$d$] {};


\node (h) at (-3, 0) [label=left:h] {};
\node (e) at (-3.3, 0.3) [label=left:$e$] {};
\node (i) at (3, 0) [label=right:i] {};
\node (f) at (3.3, 0.3) [label=right:$f$] {};


\draw [thick] (a)--(g);
\draw  [ thick](a) -- (b);
\draw [ thick] (a) -- (c);
\draw [ thick] (a) -- (g) -- (c);
\draw [thick] (g)--(f);
\draw [ thick] (g)--(e);
\draw [ thick] (g)--(d);
\draw [ thick] (d)--(h);

\draw [ thick] (b) -- (e);
\draw [ thick] (b) -- (e);
\draw [ thick] (c) -- (i);
\draw [ thick] (c) -- (h);

\end{tikzpicture}
\caption{Fuzzy vertex pooling with Criteria}
\label{fpcri}
\end{figure}

\begin{itemize}
    \item Criteria $\Pi_1$ must be nontrivial on connected f- graphs. A f-graph would be connected if for any pair of vertices, there exists a path where the membership value of edges on the path is above a certain threshold. 
 \item Criteria $\Pi_2$  must be hereditary under FGP. This means that if a f-graph satisfies the property, then after pooling edges (with fuzzy memberships considered), the resulting f-graph should still satisfy the criteria.

 \item Criteria $\Pi_3$ is determined by the undirected f-graphs,  where edges and vertices are described using fuzzy memberships, and the property depends on these fuzzy relations.

 \item Criteria $\Pi_4$ is determined by fuzzy vertex connectivity, where the vertices that meet a specific connectivity threshold.
\end{itemize}

Since almost all f- subgraph graph properties are determined by the f-graph, treats the f-subgraph pooling problem for such a property. Thus, we assume that when we pool vertices and edges, that is, not a directed f- graph.
In this section, we show that the pooling problem is NP-hard if $\Pi$ satisfies: is nontrivial on connected graphs; 
$\Pi$ is hereditary on pooling, $\Pi$ is determined undirected f- graph, $\Pi$ is determined by the strongest paths. 
Note that if a criteria $\Pi$ satisfies above conditions, then a complete f- graph satisfies $\pi$. 

\begin{theorem}
The FGP problem is NP-hard for the criteria $\Pi$. 
\end{theorem}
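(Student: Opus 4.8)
The plan is to give a polynomial-time many-one reduction from a known NP-hard problem, after first collapsing the fuzzy layer of the problem onto a combinatorial core. First I would restrict attention to \emph{uniform} f-graphs, in which every vertex carries membership $1$ and every edge carries one common membership $c\in(0,1]$. On such graphs $\mu(pq)=c$ for each edge, so every path has strength $c$ and $CONN_G(p,q)=c$ exactly when $p,q$ lie in the same component; hence a criterion $\Pi$ that is \emph{determined by the strongest paths} and by the \emph{undirected f-graph} reduces, on uniform instances, to a property of the underlying crisp graph. Moreover, pooling a uniform f-graph again yields a uniform f-graph, since the membership rules of Definition \ref{def1} take minima of equal values $c$ and the constraint $\mu_c(uv_c)\le\sigma_c(u)\wedge\sigma_c(v_c)$ stays satisfied. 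Thus the FGP problem restricted to uniform instances is precisely the crisp \emph{contract-to-}$\Pi$ problem, and it suffices to prove that this crisp problem is NP-hard.

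Second, I would build the gadgets. Because $\Pi$ is \emph{nontrivial on connected f-graphs}, there exist a connected f-graph satisfying $\Pi$ and a connected one failing it; choosing a failing graph of minimum order fixes a \emph{forbidden configuration} $D$ of constant size. Using that $\Pi$ is \emph{hereditary under FGP}, together with the observation recorded just before the theorem that every complete f-graph satisfies $\Pi$, I would encode an instance $(G_0,k)$ of a canonical NP-hard problem — Vertex Cover is convenient — as a uniform f-graph $G$ with a budget $k'$: each vertex and edge of $G_0$ is replaced by a constant-size gadget carrying a copy of $D$, wired so that pooling a designated connected piece (a legal FGP set in the sense of Definition \ref{vset}) into a single vertex destroys the copy of $D$ attached to an incident edge, whereas leaving a copy intact keeps $G/_{\zeta'}$ outside $\Pi$. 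By the commutativity of FGP (Proposition \ref{prop}) the order of these merges is immaterial.

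Third, I would check the correspondence in both directions. In the forward direction, a cover of $G_0$ of size $k$ yields a FGP set of size $k'$ whose pooled image contains no copy of $D$; heredity then guarantees that any further forced merges keep the image in $\Pi$, so $G/_{\zeta'}$ satisfies $\Pi$. Conversely, from any FGP set of size at most $k'$ with $G/_{\zeta'}\in\Pi$ I would read off a cover of the same size, invoking the minimality of $D$ and the determined-by-strongest-paths hypothesis to argue that no pooling can remove a copy of $D$ without spending a matching unit of budget. Since each gadget is of constant size, the construction is polynomial in $|G_0|$, giving the reduction; together with a polynomial verifier — computing $G/_{\zeta'}$ via Definition \ref{def1} and testing $\Pi$ through the $CONN$-values in polynomial time — this would also place FGP in NP.

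The hard part will be the gadget construction in the second step: one must design the $D$-carrying gadgets so that they (i) respect the fuzzy constraint $\mu(pq)\le\sigma(p)\wedge\sigma(q)$ at every intermediate pooling, and (ii) are \emph{rigid}, meaning the only poolings that eliminate all copies of $D$ within the budget are those arising from genuine solutions of the source instance. Condition (ii) is delicate precisely because $\Pi$ is controlled only through strongest paths, so an unplanned pooling could in principle create or destroy a forbidden configuration; excluding this requires an exhaustive case analysis, including the degenerate situations where $D$ is a single edge, where $\Pi$ already holds on all of $G$, and where a pooling produces a new complete f-subgraph that automatically satisfies $\Pi$ — the same obstacles that drive the classical node-deletion hardness arguments for hereditary properties.
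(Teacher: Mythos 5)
Your proposal is a plan for a proof, not a proof: everything hinges on the gadget construction in your second step, and that construction is never carried out. An NP-hardness argument by reduction stands or falls with its gadgets --- their explicit description, the wiring that makes ``pooling a designated piece destroys the copy of $D$'' true, and the rigidity argument that no unintended pooling can eliminate all copies of $D$ within budget --- and you defer all three, conceding yourself that the rigidity condition requires an exhaustive case analysis you do not perform. This is not a routine verification a reader could fill in: for contraction/pooling problems (as opposed to the Lewis--Yannakakis node-deletion setting whose obstacles you invoke), the hardness of achieving a hereditary, nontrivial property by a bounded number of contractions is exactly where the combinatorial difficulty lives, and here the gadgets must be built from the minimal forbidden configuration $D$ alone, since $\Pi$ is known only through the four abstract axioms; the degenerate cases you list ($D$ a single edge, poolings creating complete f-subgraphs that automatically satisfy $\Pi$) each break a naive construction and must be handled, and the budget accounting in both directions of your third step depends entirely on the unspecified gadget properties. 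Your first step --- restricting to uniform instances so that criteria determined by strongest paths and by the undirected f-graph collapse to crisp graph properties, and checking that pooling preserves uniformity --- is correct and genuinely useful, but it only reduces the fuzzy problem to a crisp one; it does not make the crisp problem hard. A side remark: your parenthetical claim that this ``also places FGP in NP'' is unjustified, since nothing in the hypotheses makes $\Pi$ itself polynomial-time decidable; fortunately the theorem claims only NP-hardness.

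For what it is worth, the paper's own argument takes an even thinner route: it names no source NP-hard problem and exhibits no reduction, merely pooling an edge of a minimal connected subgraph $H$ violating $\Pi$ and asserting that the result is consistent with $\Pi$ and computable in polynomial time. So your outline is closer in spirit to what a genuine proof (in the style of the classical contraction-hardness results) would look like, but as submitted it has the concrete gap above: the reduction is announced, its correctness criteria are stated, and the object that would satisfy them is never exhibited.
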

\begin{proof}
    Let $G: (\sigma, \mu)$ be an instance of f-graph that to be pooled under $\Pi$, an undirected graph with $|\sigma*|$ vertices. Let $G_1: (\sigma_1, \mu_1)$ be the f-graph obtained from $G$ by FGP with criteria $\Pi$ (edges and vertices are pooled using definition \ref{def1}). Let $\mu_1*$ be the set of edges formed. Then $G_1$ has a vertex set $\sigma_1*$ where of $|\sigma_1*| = |\sigma*| - k$, where $k$ is the number of vertices in f- subgraph follows $\Phi$.  Let $H$ be a connected f-subgraph with the minimum number of vertices that not satisfying criteria $\Pi$. Let $pq$ be an edge of $H$. The resulting graph  has properties consistent with 
$\pi$, and the transformation can be done in polynomial time.
\end{proof}

\section{Fuzzy graph informed neural networks}
The neural network designed for this study consists of an input layer of 2 neurons, two hidden
layers of 8 neurons each and the output layer has 2 neurons, Figure \ref{fppp}. The goal of this network is to
determine if the two input values belong to class 0 (their sum is less than 1) or class 1 (their sum
is greater than 1) and generate a decision boundary. Inputs 1 and 2 are floating point (decimal) numbers less than 1. Output 1 and 2 represent classes 0
and 1 respectively. The neuron in the next layer is activated using the sigmoid function on the
weighted sum of all the neurons in the previous layer ( $\sigma(x) = 1/(1+e^x)$).\medskip

\begin{center}
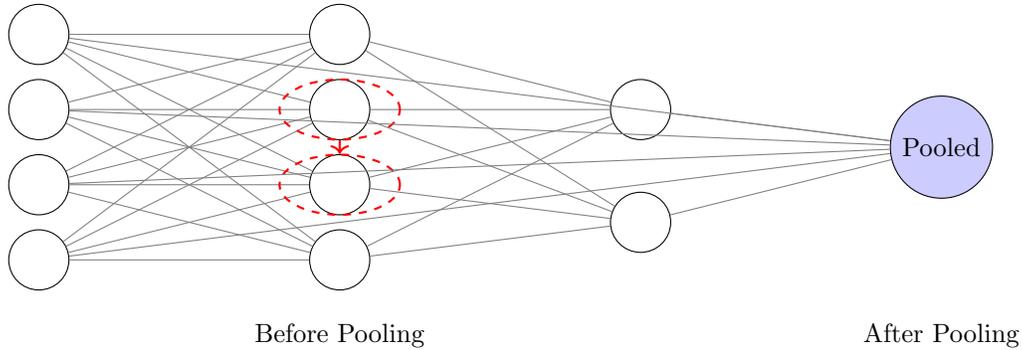
\begin{figure}\centering
\begin{tikzpicture}[x=2cm, y=1cm, every node/.style={circle,draw,minimum size=0.8cm}]

\node (I1) at (0,2) {};
\node (I2) at (0,1) {};
\node (I3) at (0,0) {};
\node (I4) at (0,-1) {};

\node (H1) at (2,2) {};
\node (H2) at (2,1) {};
\node (H3) at (2,0) {};
\node (H4) at (2,-1) {};

\node (O1) at (4,1) {};
\node (O2) at (4,-0.5) {};

\foreach \i in {1,2,3,4} {
    \foreach \h in {1,2,3,4} {
        \draw[gray] (I\i) -- (H\h);
    }
}

\foreach \h in {1,2,3,4} {
    \foreach \o in {1,2} {
        \draw[gray] (H\h) -- (O\o);
    }
}

\draw[red,thick,dashed] (H2) circle [radius=0.4];
\draw[red,thick,dashed] (H3) circle [radius=0.4];
\draw[->,red,thick] (H2) -- (H3);

\node[fill=blue!20] (H5) at (6,0.5) {Pooled};
\foreach \i in {1,2,3,4} {
    \draw[gray] (I\i) -- (H5);
}
\foreach \o in {1,2} {
    \draw[gray] (H5) -- (O\o);
}

\node[draw=none] at (2,-2) {Before Pooling};
\node[draw=none] at (6,-2) {After Pooling};

\end{tikzpicture}
    	
	\caption{ Fuzzy vertex pooling model}
    \label{fppp}
    \end{figure}
\end{center}

\subsection{Methods and Sources of Data Collection}
The study will comprise of two tests. The dataset of the first set is relatively simple. It comprises
of 100 ordered pairs of floating point numbers. Half of which add up to less than 1 and the other
half, greater than 1. These numbers are randomly generated and their classes were determined.

\begin{table}[h!]
\centering
\begin{tabular}{llll}
data - [ &&&\\
\# Class 0 (Sum $< $ 1) &&&\\
\hline
$[0.1, 0.4]$ & $[0.2, 0.3]$ & $[0.1, 0.5]$ & $[0.3, 0.2]$ \\
$[0.4, 0.1]$ & $[0.3, 0.3]$ & $[0.2, 0.4]$ & $[0.1, 0.6]$ \\
$[0.2, 0.5]$ & $[0.4, 0.3]$ & $[0.3, 0.2]$ & $[0.2, 0.7]$ \\
$[0.3, 0.1]$ & $[0.3, 0.3]$ & $[0.2, 0.6]$ & $[0.2, 0.5]$ \\
$[0.7, 0.4]$ & $[0.8, 0.3]$ & $[0.9, 0.2]$ & $[0.6, 0.4]$ \\
$[0.5, 0.6]$ & $[0.7, 0.3]$ & $[0.8, 0.5]$ & $[0.7, 0.4]$ \\
$[0.9, 0.3]$ & $[0.8, 0.6]$ & $[0.7, 0.5]$ & $[0.6, 0.5]$ \\
$[0.7, 0.4]$ & $[0.9, 0.5]$ & $[0.8, 0.6]$ & $[0.9, 0.4]$ \\
\hline
\end{tabular}
\begin{tabular}{llll}

\# Class 1 (Sum $> $ 1) &&&\\
\hline
$[0.6, 0.5]$ & $[0.7, 0.4]$ & $[0.8, 0.3]$ & $[0.9, 0.2]$ \\
$[0.6, 0.4]$ & $[0.7, 0.3]$ & $[0.8, 0.2]$ & $[0.9, 0.1]$ \\
$[0.6, 0.6]$ & $[0.7, 0.5]$ & $[0.8, 0.4]$ & $[0.9, 0.3]$ \\
$[0.8, 0.7]$ & $[0.9, 0.6]$ & $[0.9, 0.7]$ & $[0.7, 0.8]$ \\
$[0.7, 0.9]$ & $[0.8, 0.8]$ & $[0.9, 0.8]$ & $[0.9, 0.9]$ \\
$[0.9, 0.95]$ & $[0.6, 0.95]$ & $[0.7, 0.85]$ & $[0.8, 0.85]$ \\
$[0.7, 0.65]$ & $[0.6, 0.75]$ & $[0.8, 0.75]$ & $[0.9, 0.75]$ \\
$[0.6, 0.85]$ & $[0.7, 0.85]$ & $[0.9, 0.65]$ & $[0.6, 0.65]$ \\
$[0.7, 0.6]$ & $[0.8, 0.55]$ &  &  \\
\hline
\end{tabular}
\label{tab1}
\caption{Data points table}
\end{table}

\begin{figure}[!htb]
		\centering
		\includegraphics[width=.8\linewidth]{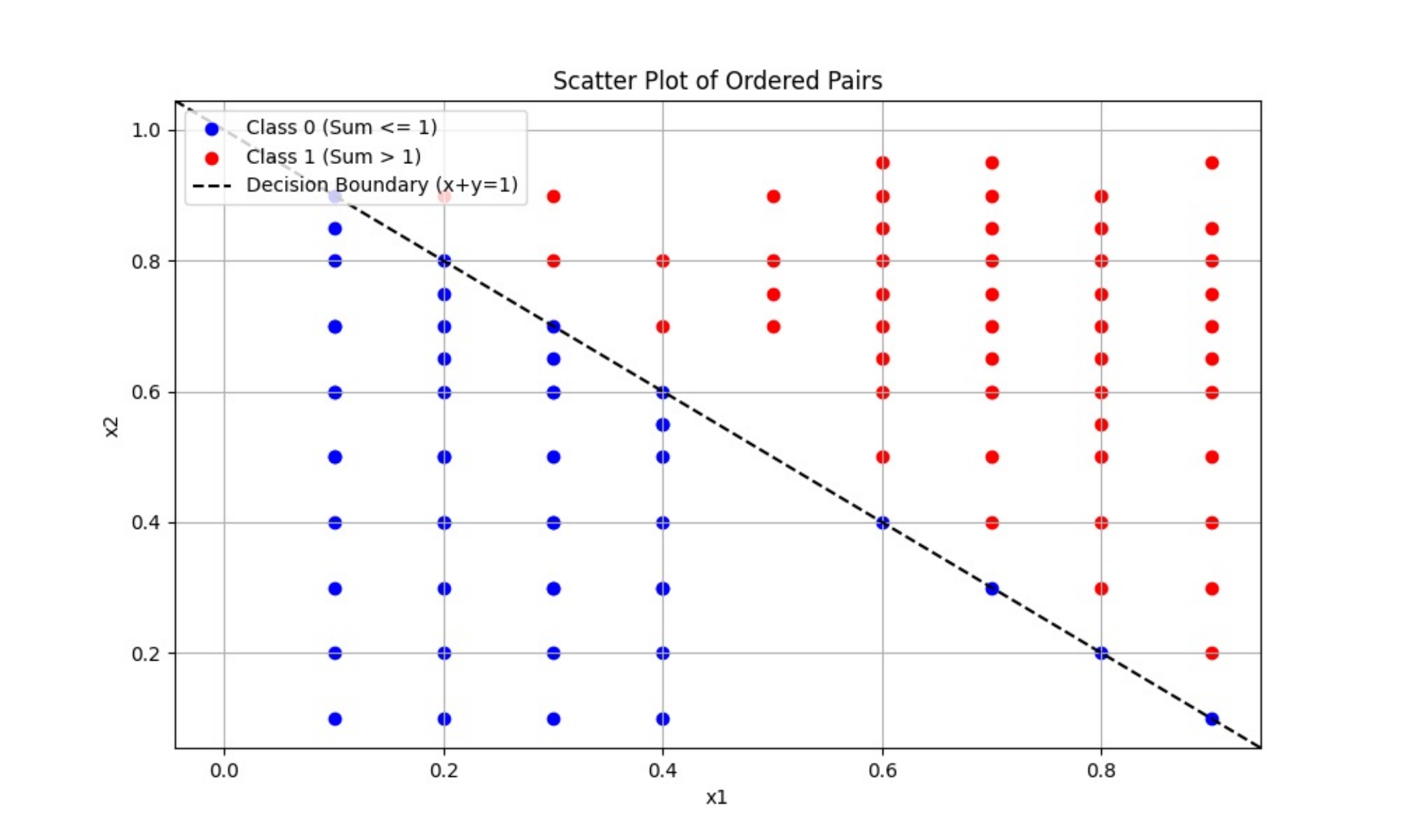}
		\label{f2}
		\caption{ Dataset 1 with 100 floating point ordered pairs and their simple $xy$ scatter plot}\label{Fig:Data2}
\end{figure}
\begin{table}[h!]
\centering
\begin{tabular}{|l|l|}
\hline
\([0.2510190737913408, 0.6259424942673553]\) & \([0.218839224, 0.405943179971498]\) \\
\([0.2239640999731, 0.36300336]\)           & \([0.7742101924262627, 0.08885469930344578]\) \\
\([0.120308721, 0.7187321846929098]\)       & \([0.27126308384037245, 0.79574132]\) \\
\([0.4456921923751095, 0.1493200825774095]\) & \([0.234270022, 0.5800658340241851]\) \\
\([0.024727105161211158, 0.101667]\)        & \([0.08146331996385, 0.7131972427875436]\) \\
\([0.34589612, 0.943313609963865]\)         & \([0.1051672309900156, 0.00928737]\) \\
\([0.43637059037982, 0.55977085410055]\)    & \([0.957396348, 0.35251081826426]\) \\
\([0.34332206177360427, 0.5430516]\)        & \([0.201810022611168, 0.03063861833530569]\) \\
\([0.46053307, 0.10169015185019025]\)       & \([0.1694911077118097, 0.36330927]\) \\
\([0.55022089924165673, 0.4261096553359686]\) & \([0.3253727, 0.49262867111419906]\) \\
\([0.29022898945996186, 0.643084]\)         & \([0.5802113825489248, 0.01458276552724822]\) \\
\([0.015650422, 0.5117441041653529]\)       & \([0.13984017716325267, 0.808742209]\) \\
\([0.81144393973905, 0.09847477773029108]\) & \([0.4378103, 0.80081139194148]\) \\
\([0.01281428346066545, 0.6418489]\)        & \([0.41026384501725, 0.7622195971674454]\) \\
\([0.1041839, 0.72102484993907]\)           & \([0.45092763427115, 0.3843202]\) \\
\hline
\end{tabular}
	\label{tab2}
\caption{Data points table}
\end{table}

\begin{figure}[!htb]
		\centering
		\includegraphics[width=.8\linewidth]{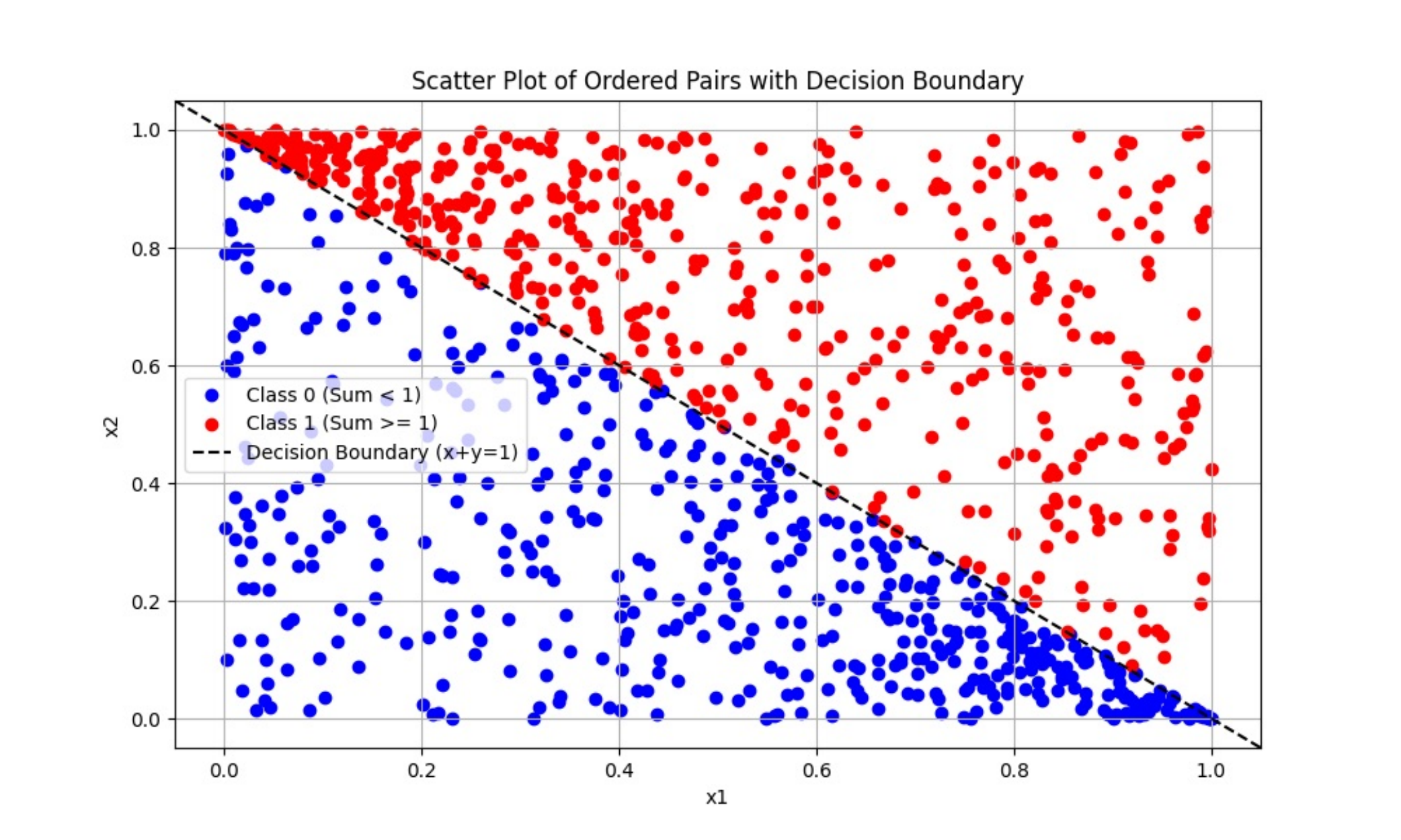}
		\label{f3}
 		\caption{ Dataset 2 with 1000 floating point ordered pairs and their $xy$ scatter plot}
\end{figure}

\subsection{Fuzzy Graph Model}
The models were coded using numpy library in python. Numpy stands for numeric python and is
useful when working with arrays and matrices. Additional libraries such as matplot lib was also
used to aid in graphing the figures. 
The models use traditional deep learning algorithms that activate the neurons in the next layer
(forward propagation) and then backtrack to minimize any loss or errors (backpropagation). The fuzzy neural networks, both baseline and pooling models, gave best results when the learning rate of the network was set to 0.025. The learning rate determines the step size that the backpropagation algorithm must take to minimize loss. If the step size is too small, the model will take too long to tweak the weights and biases of the model. If the step size is too big, the
backpropagation algorithm would overshoot in its correction and this would exponentially add up
to giving high loss. They were trained with 50,000 epochs, and in all cases, the pooling model was able to minimize
the loss much quicker at the cost of being slightly inaccurate in the testing case for Dataset 2 (1000
floating point numbers, few points are given in Table \ref{tab2}).
pooling are applied in the pooling model every 10,000 epochs. When a pooling is
applied, 2 neurons merge based on a threshold value set to eliminate weak relations. \cite{li2020explain} A pooling changes the internal structure of the hidden layers, which causes the model to
temporarily behave unpredictably. The backpropagation algorithm of a neural network tries to minimize the loss (defined as a
normalized mean square error of the model) after every training iteration (known as an epoch).
Loss is therefore minimized over time.
When both networks were training over Dataset 1 (Table \ref{tab1}), it is observed from Figures 9 and 10 that
the pooling model minimizes the loss much quicker than the baseline model. Scatter plot of the Dataset 1 and 2 is given in Figure 7 and 8.
   \begin{center}
   
    \begin{figure}
     \centering
              \includegraphics[width=10cm, height=5cm]{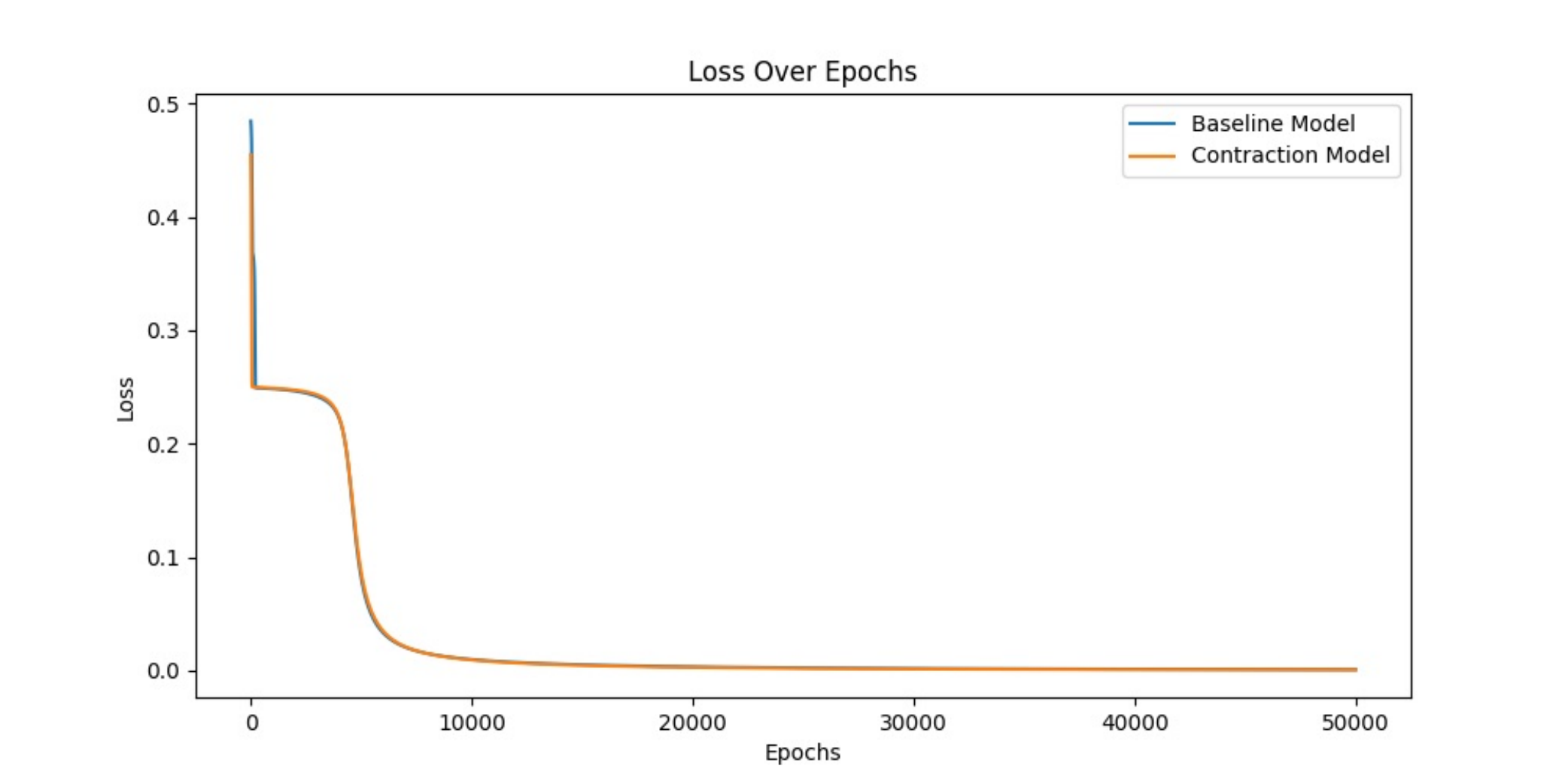}
              	\label{f5}
              \caption{Plot for Loss over 50,000 Epochs}
          \end{figure}

          \begin{figure}
           \centering
              \includegraphics[width=10cm, height=5cm]{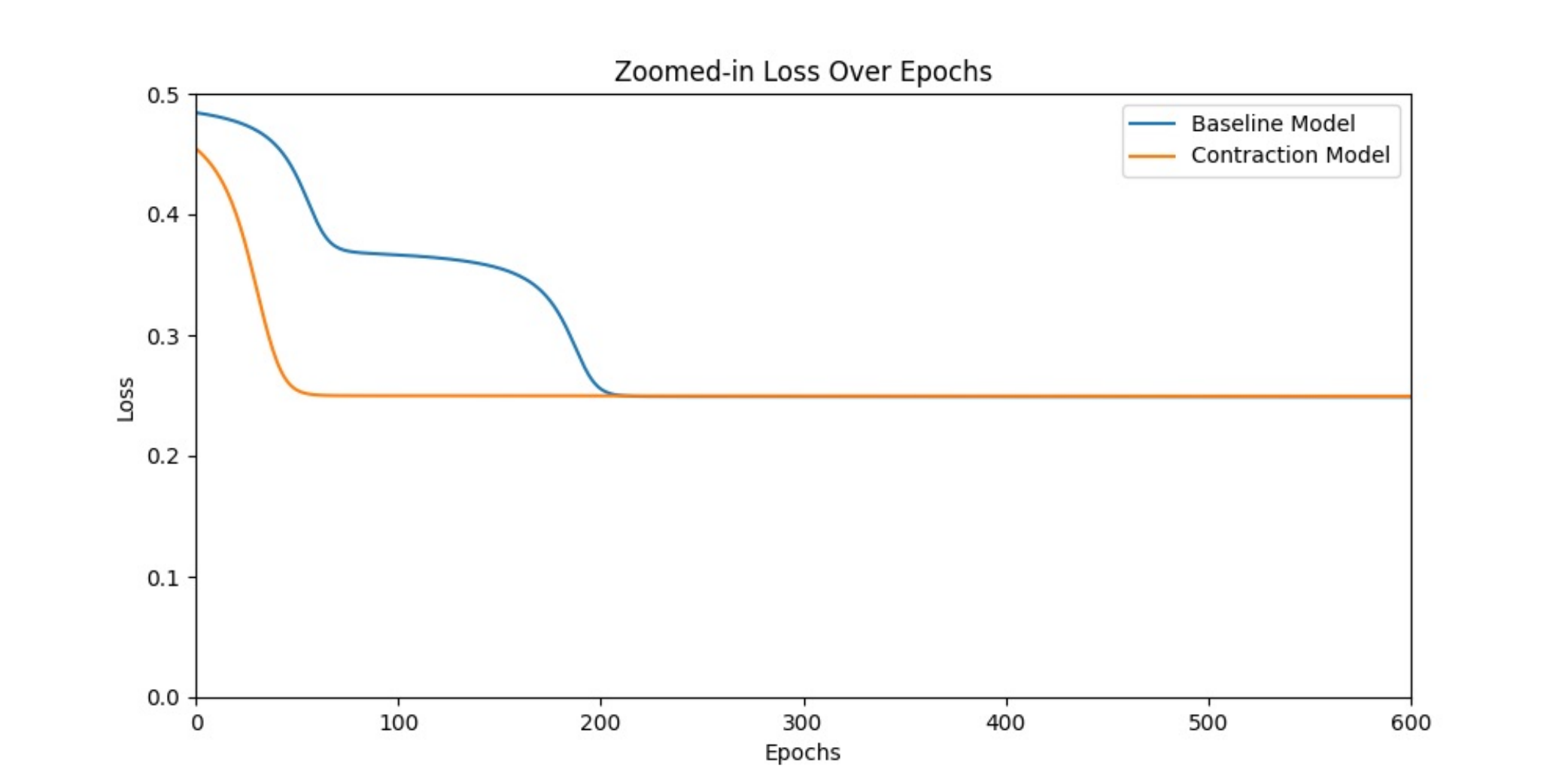}
              	\label{f6}
              \caption{Plot for Loss over the first 600 Epochs}
            \end{figure}
\end{center}
    
\begin{figure}
  \includegraphics[width=8cm, height=6cm]{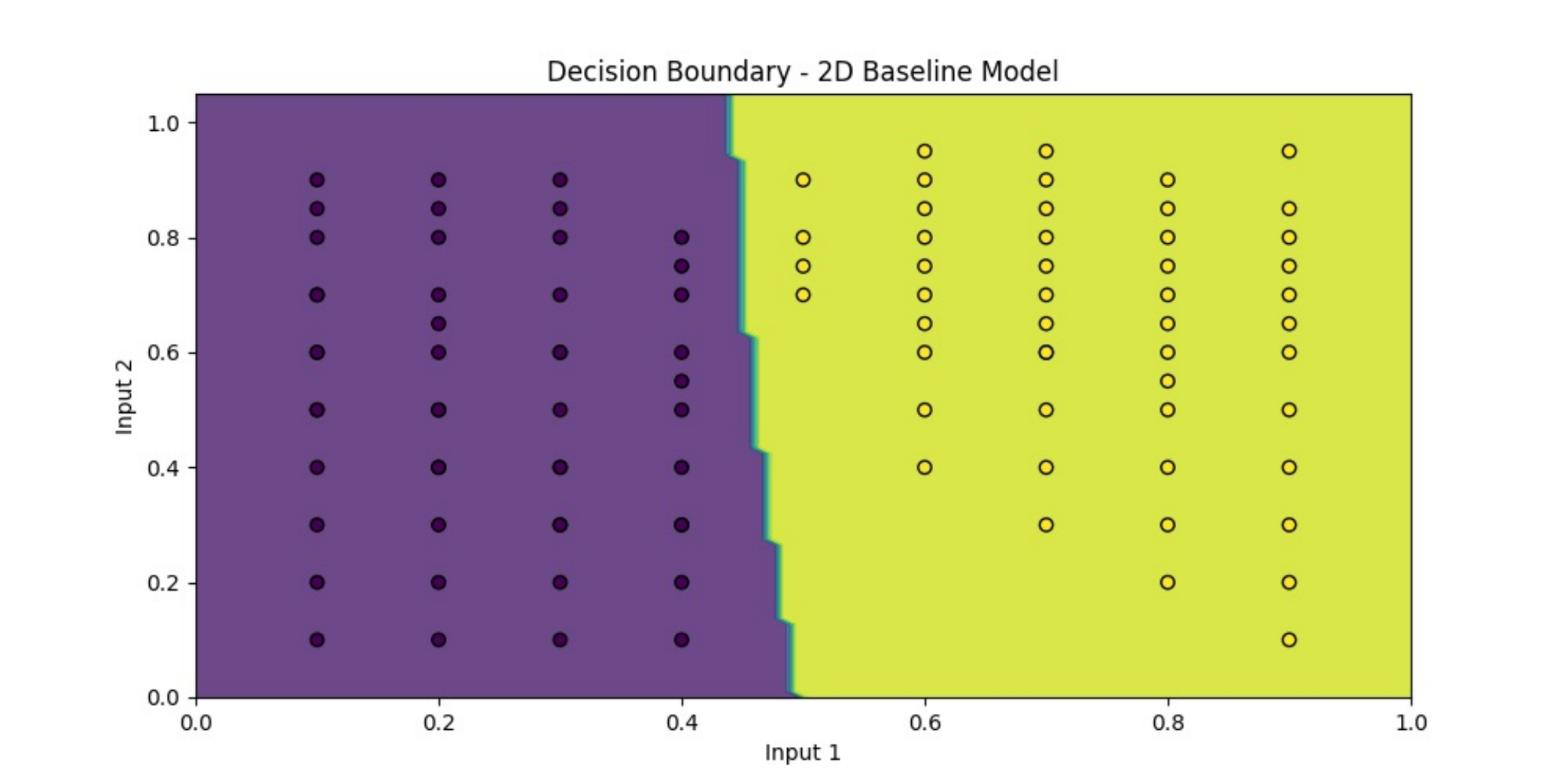}
		\label{f7}
\hfill
  \includegraphics[width=8cm, height=6cm]{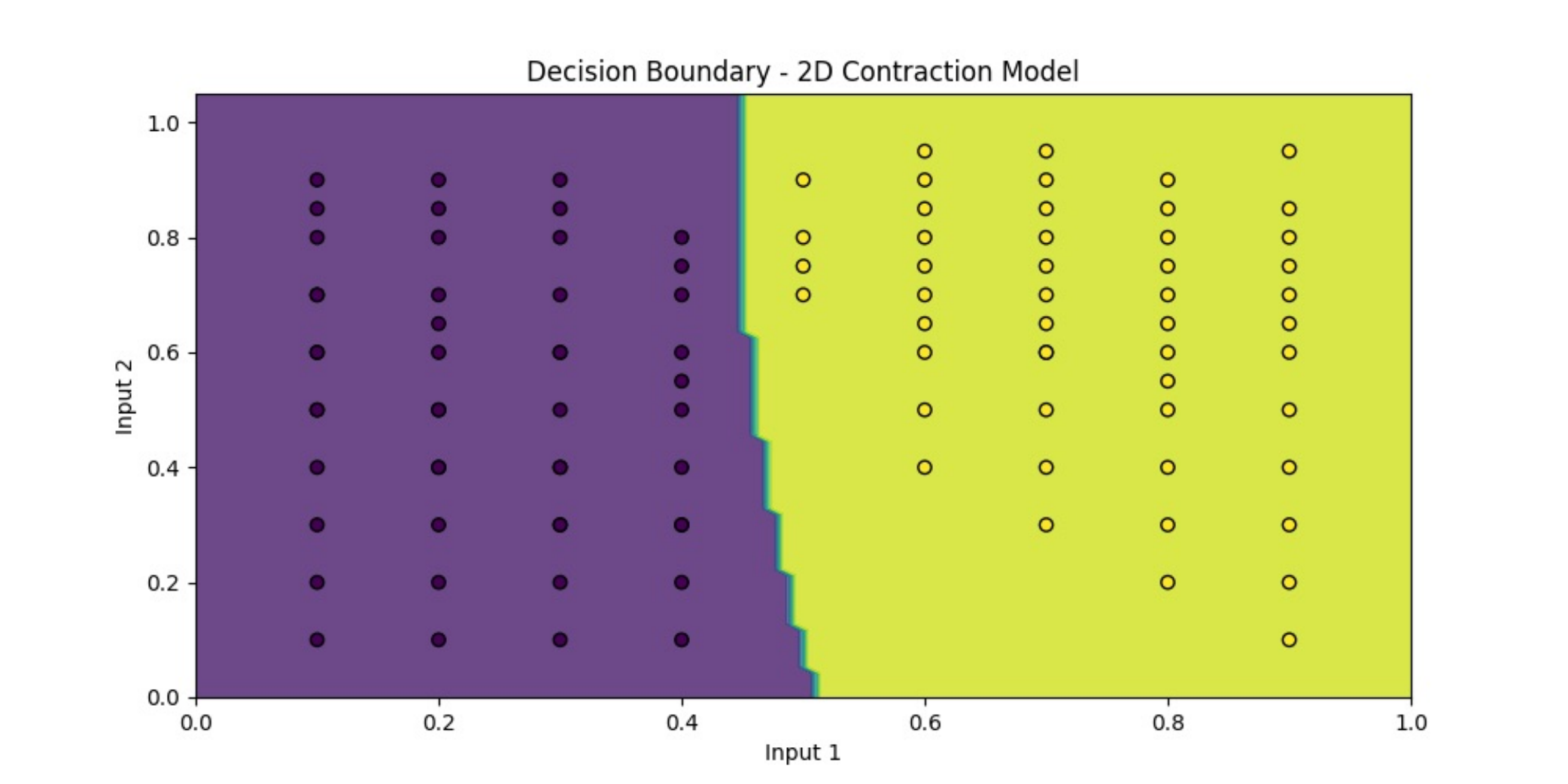}
		\label{f8}
	\caption{  Decision boundary of baseline
mode and contraction model}
    \end{figure}

\begin{figure}
  \includegraphics[width=8cm, height=7cm]{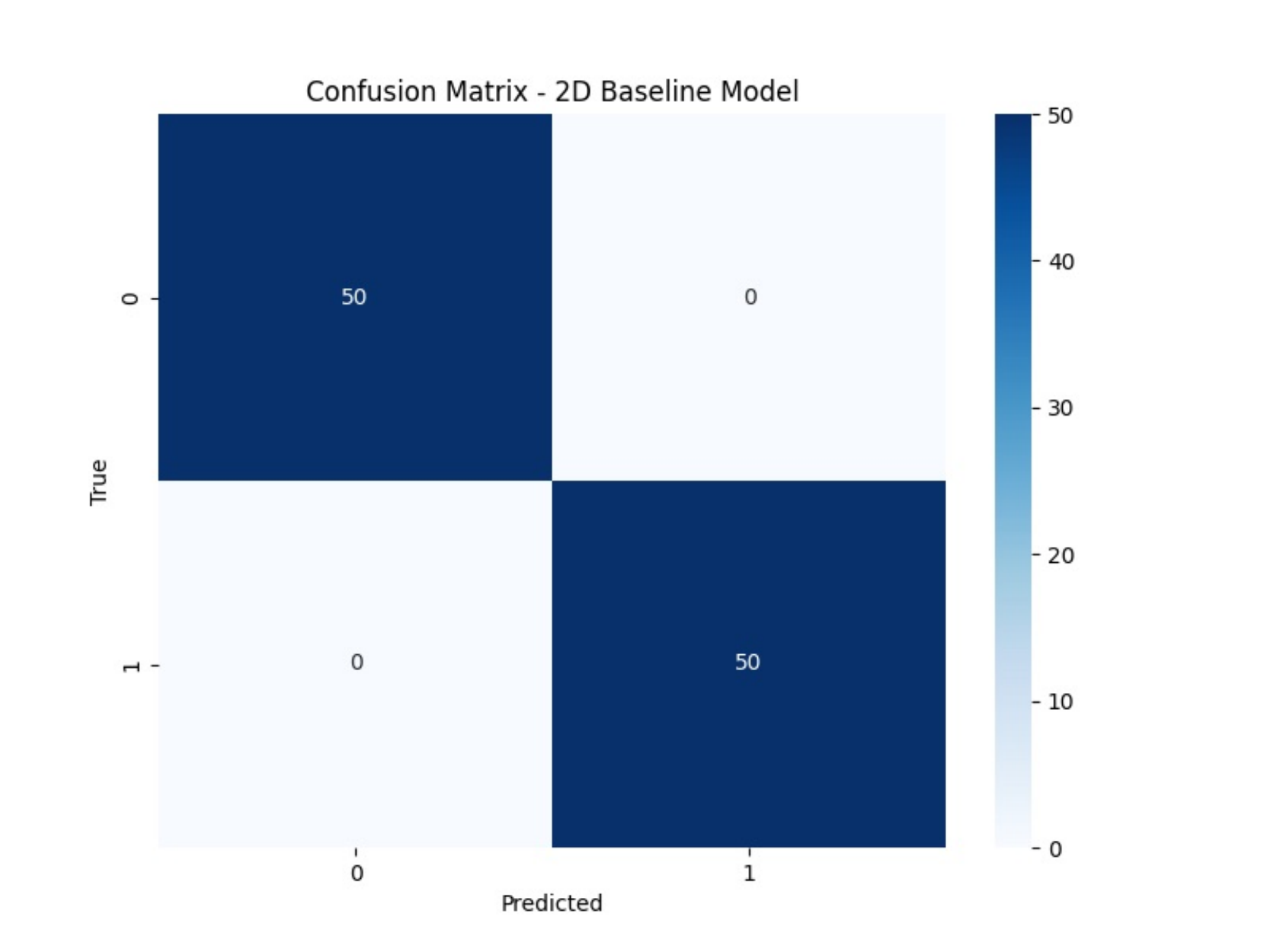}
		\label{f9}
\hfill
  \includegraphics[width=8cm, height=7cm]{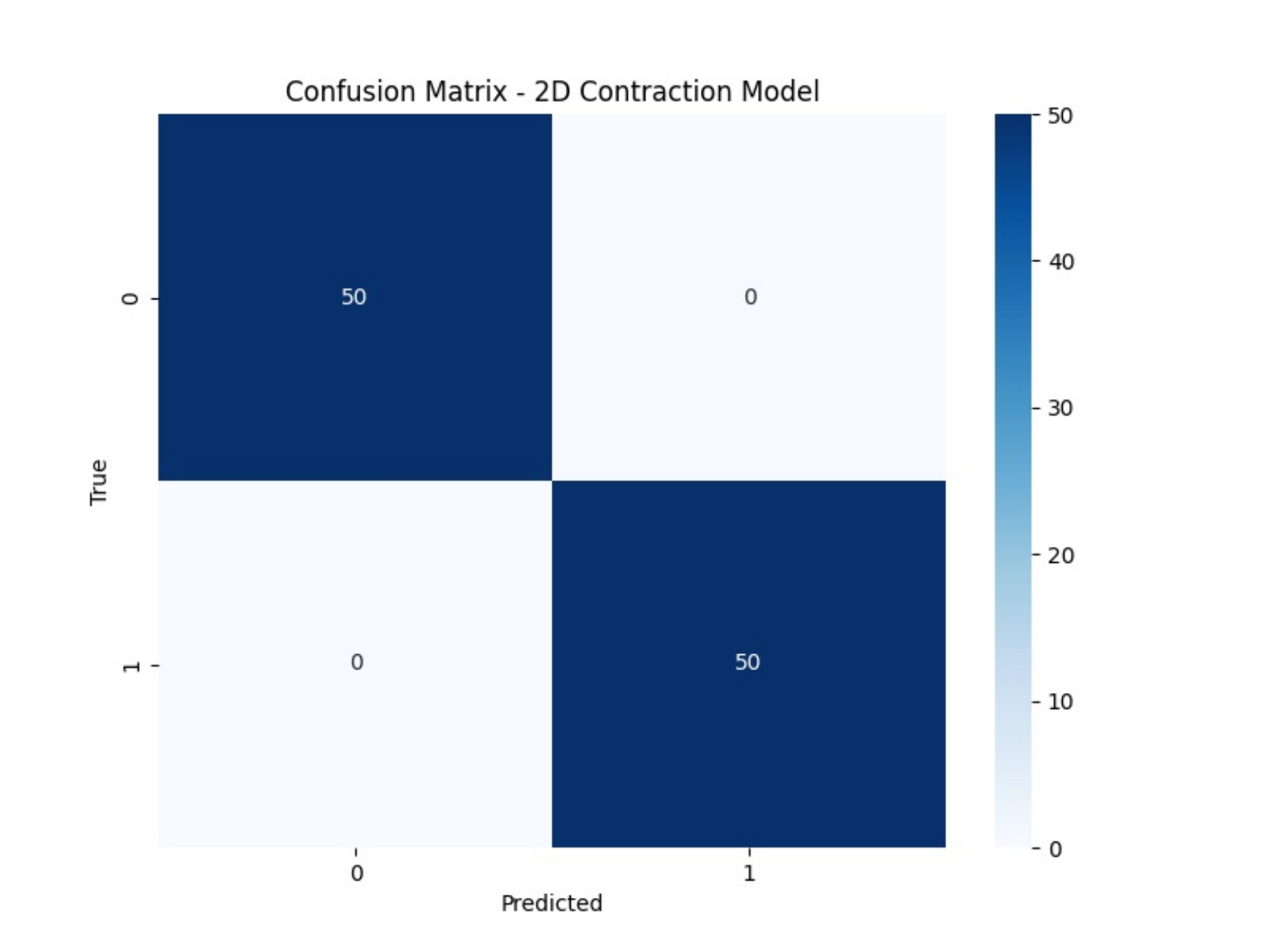}
		\label{f10}
	\caption{  Confusion matrix of baseline model and contraction model}
    \end{figure}

When both networks were training over Dataset 2, it is observed from Figures 13 and 14 that
the pooling model again is much quicker at minimizing loss. It is also important to note that as
pooling are applied every 10,000 epochs, there are spikes in the pooling model’s graph.
This is because a pooling changes the internal structure of the hidden layers, which results in
errors surfacing temporarily. However, this too is quickly minimized.\medskip

A decision boundary is the model’s understanding of where the classes tend to separate in the real valued
vector space of input values. This is commonly used with support vector machines where
multi-dimensional inputs are to be classified with decision boundaries.
A confusion matrix showcases how well a model performs by checking how many predicted values
were actually correct. The matrix’s leading diagonal represent how many predicted true's and false's
were actually correct. Ideal Confusion matrices have their leading diagonal high and their reverse
diagonal zero.
   
   \begin{center}
   
    \begin{figure}
     \centering
              \includegraphics[width=10cm, height=5cm]{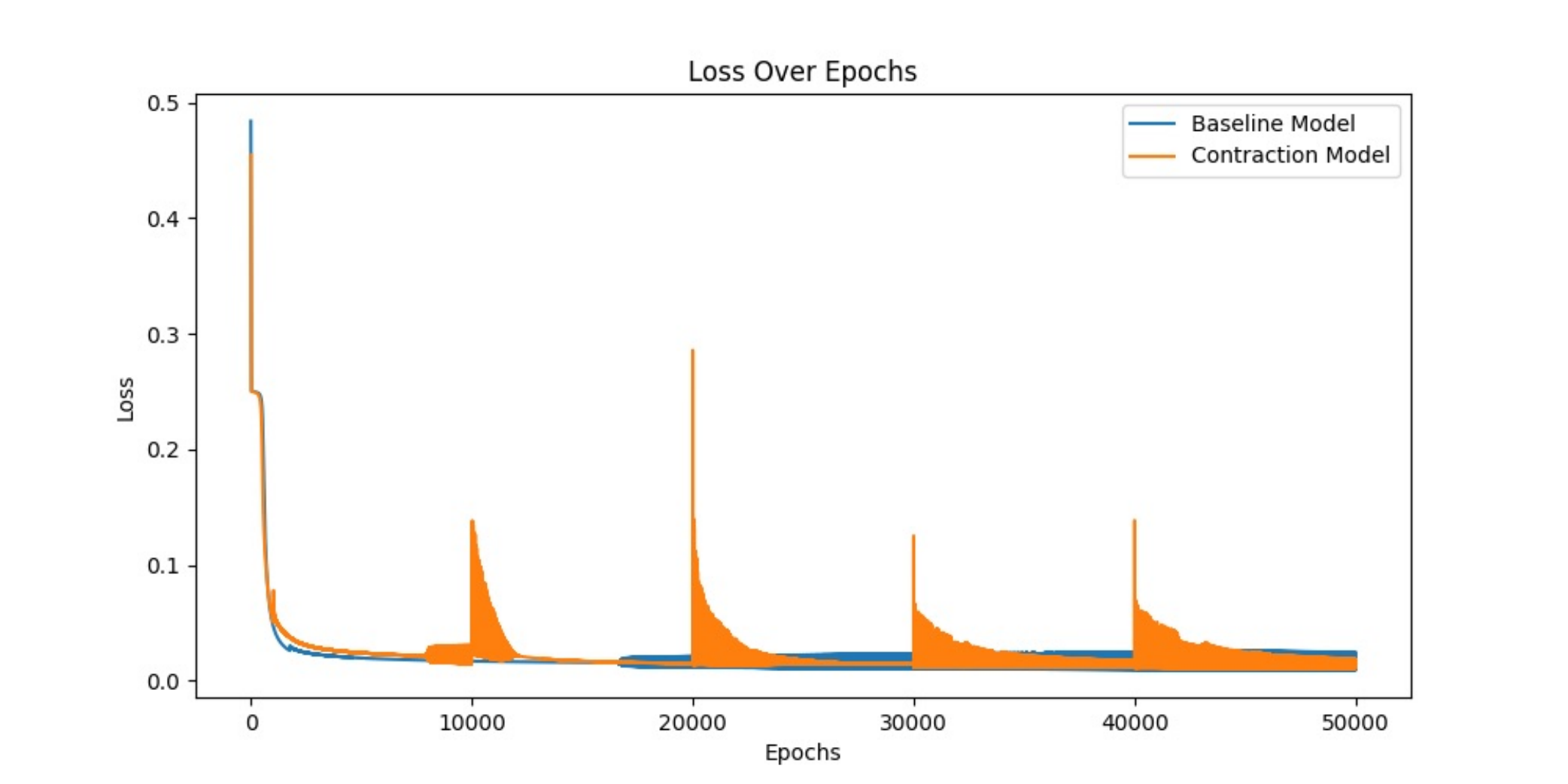}
              	\label{f11}
              \caption{Plot for Loss over 50,000 Epochs}
          \end{figure}

          \begin{figure}
           \centering
              \includegraphics[width=10cm, height=5cm]{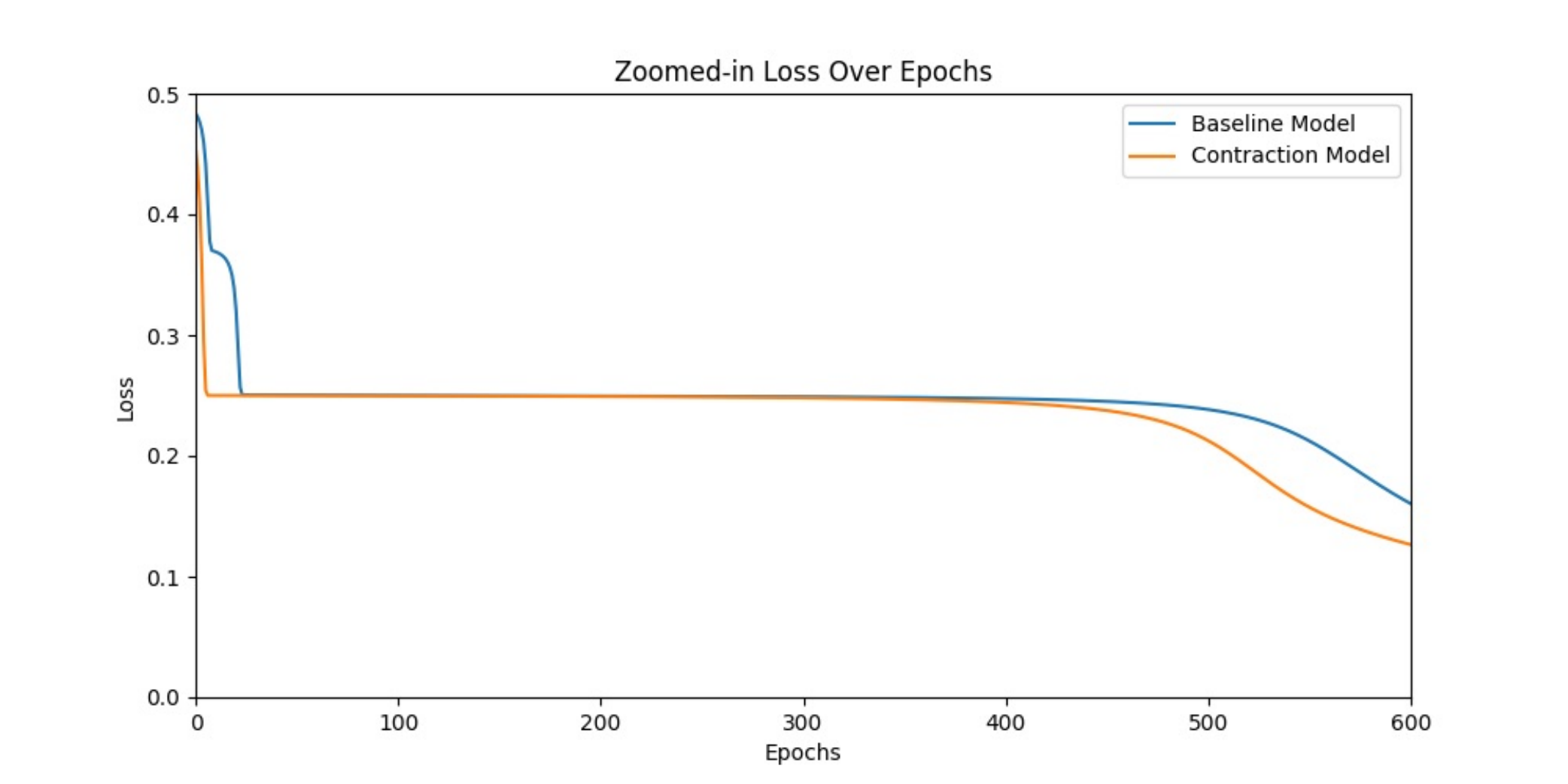}
              	\label{f12}
              \caption{Plot for Loss over the first 600 Epochs}
            \end{figure}
\end{center}
     
  \begin{figure}
  \includegraphics[width=8cm, height=6cm]{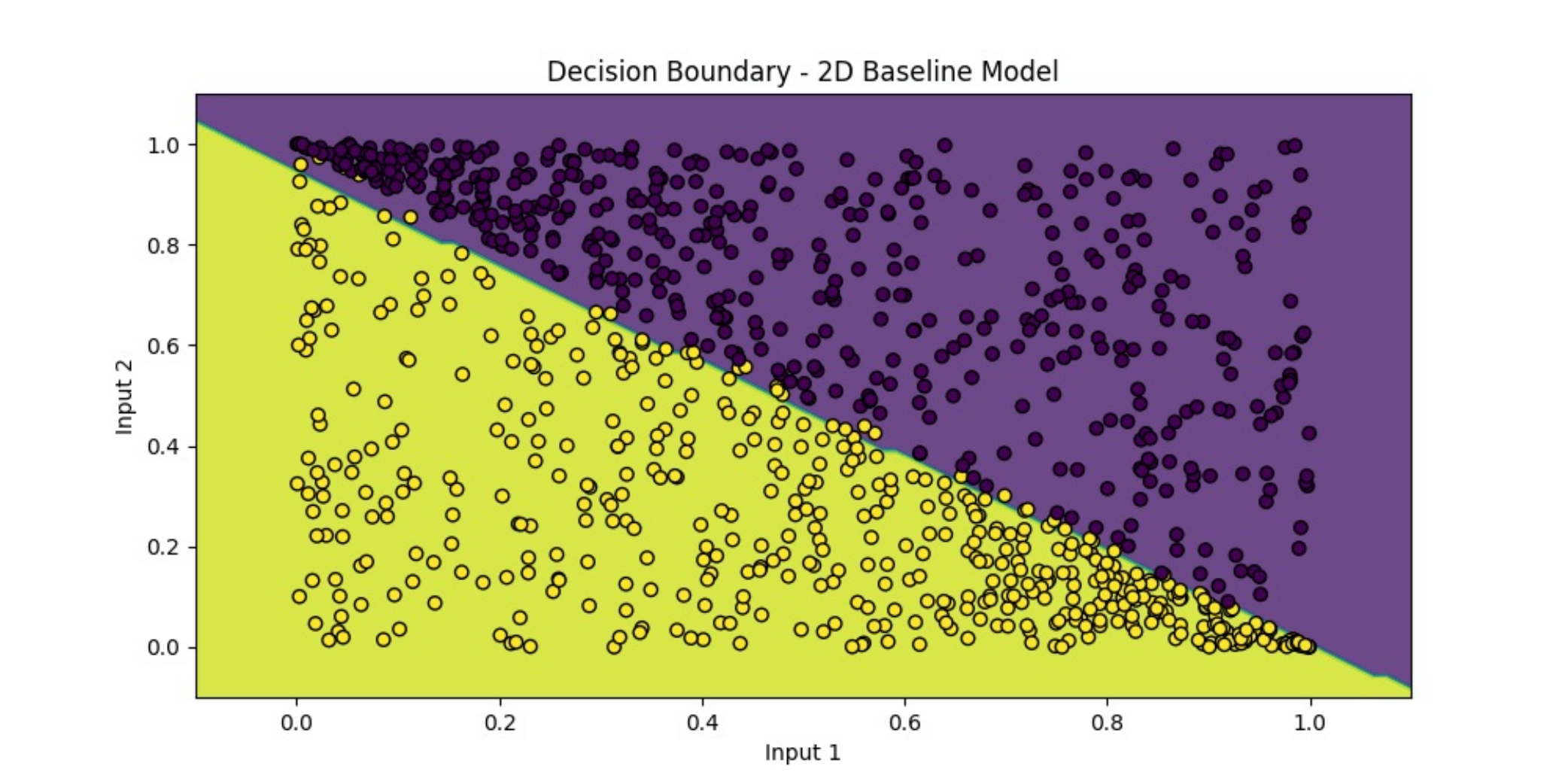}
		\label{f13}
\hfill
  \includegraphics[width=8cm, height=6cm]{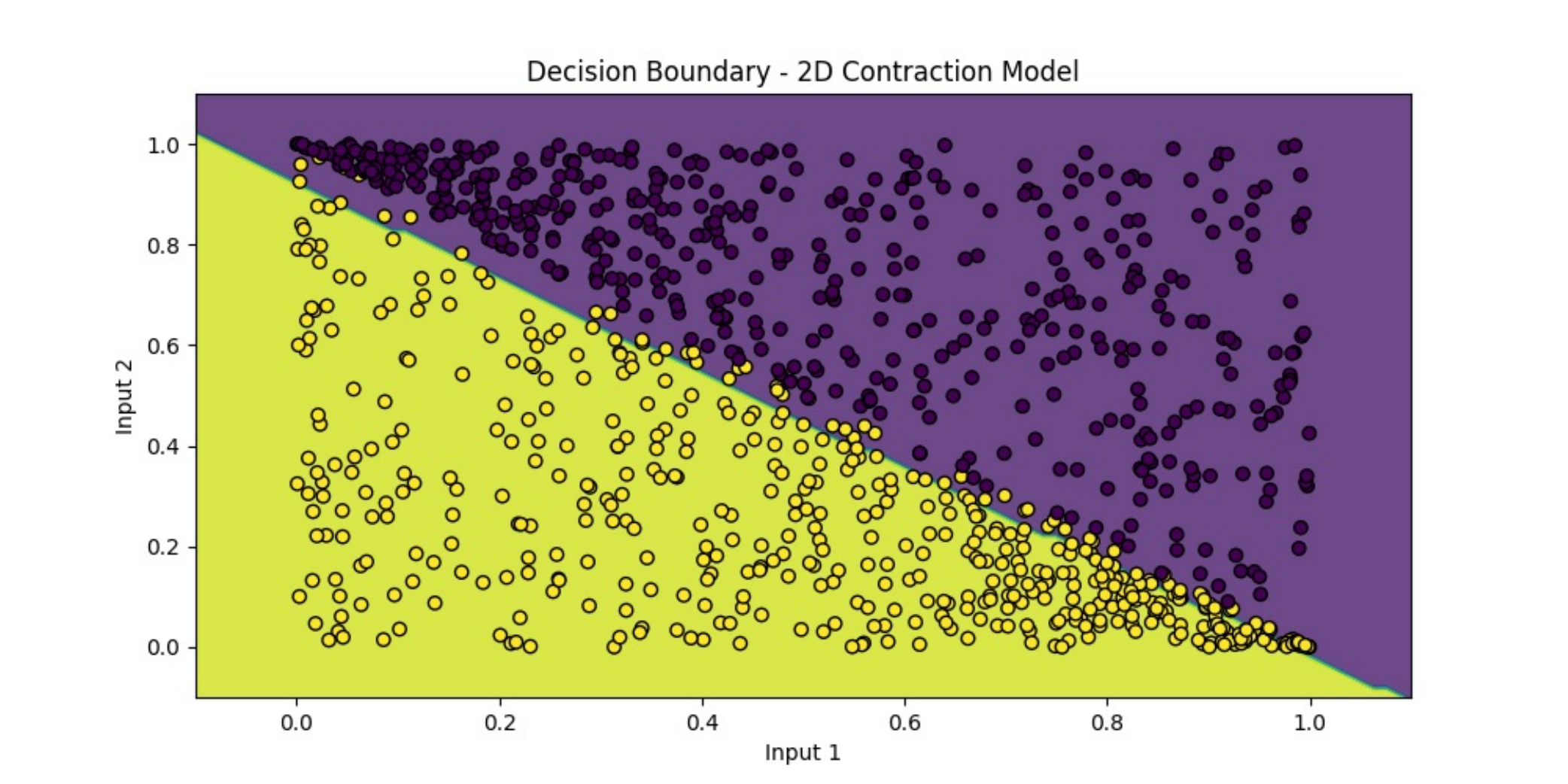}
		\label{f14}
	\caption{ Decision boundary of baseline
mode and contraction model}
    \end{figure}
    
\begin{figure}
  \includegraphics[width=8cm, height=7cm]{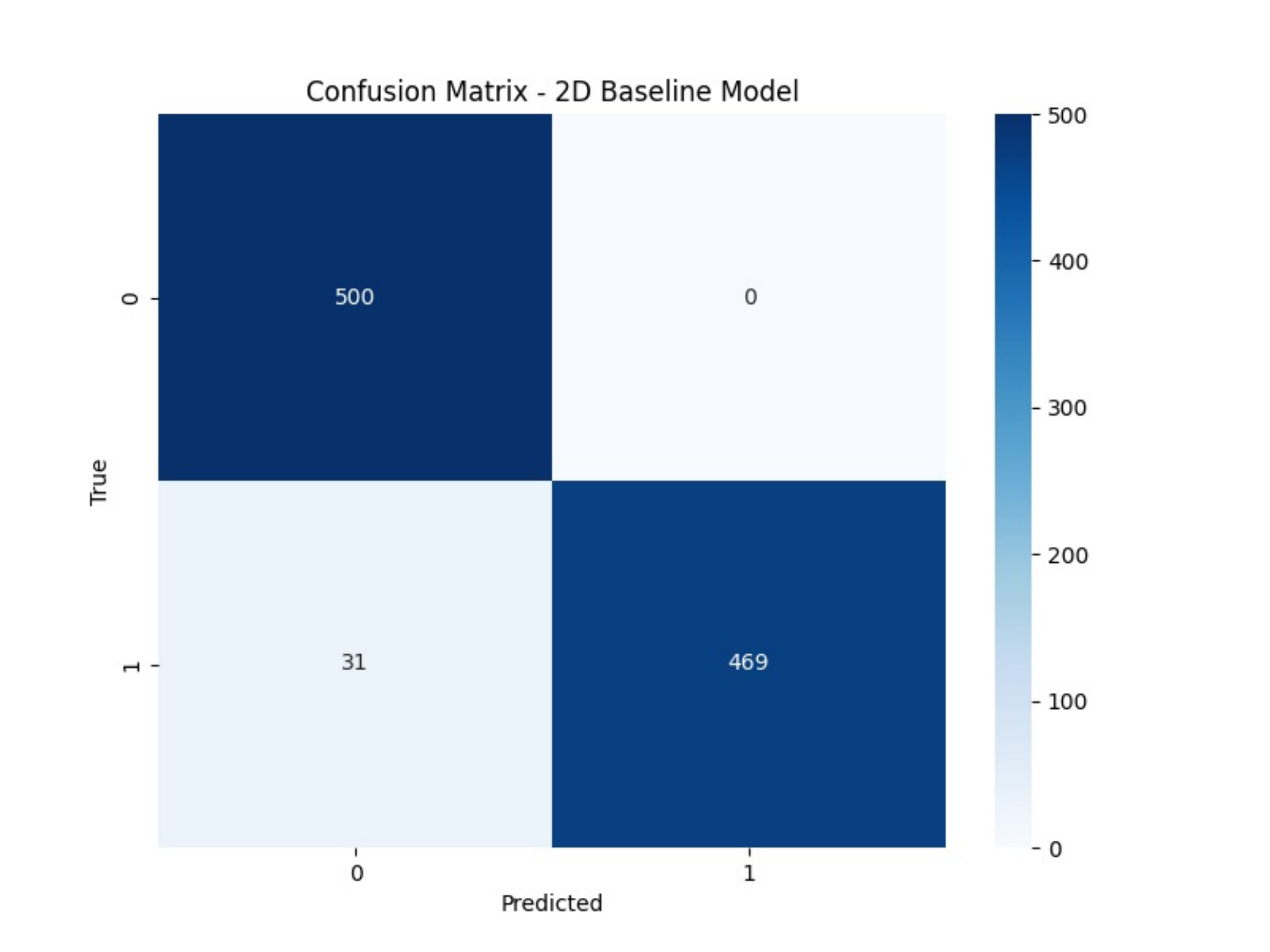}
		\label{f15}
\hfill
  \includegraphics[width=8cm, height=7cm]{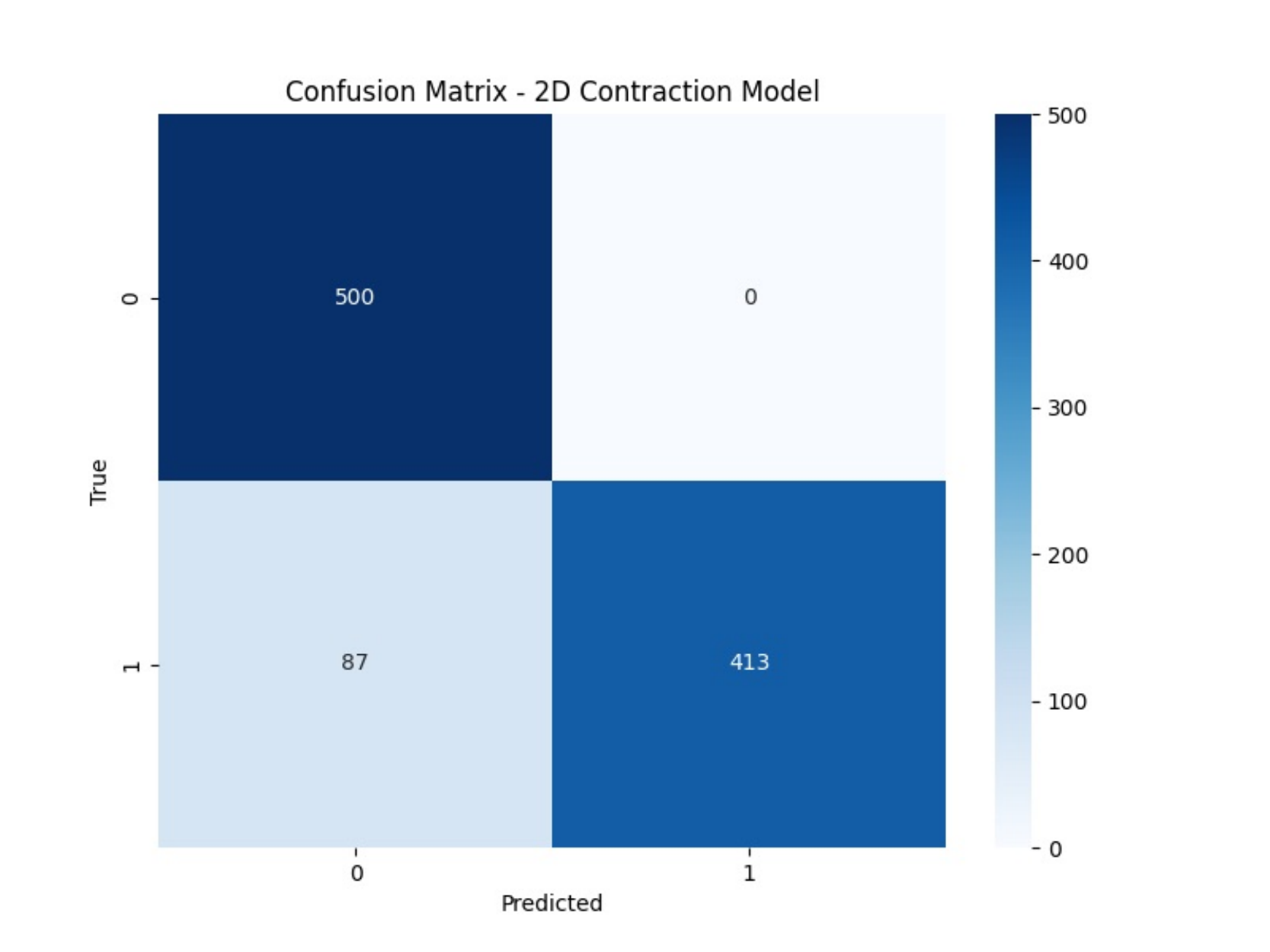}
		\label{f16}
	\caption{  Confusion matrix of baseline model and contraction model}
    \end{figure}

When both networks were training from Dataset 1, both models performed almost identically. This
is evident from the fact that figures 11 are nearly identical and all points are classes
correctly. Therefore, the confusion matrices in figures 12 for both models also generate
a perfect ideal result, where all predicted values were in fact correct.
When both networks were training from Dataset 2, the baseline model was able to classify the
input data more accurately than the pooling model. The pooling model’s decision boundary
does miss a small portion of points [Figures 15]. The confusion matrices for both models
also indicate that the baseline model was more accurate than the pooling model [Figures 16].

\section{FGPNNs Algorithm}
The Feature-Guided Pooling Neural Network (FGPNN) algorithm optimizes neural networks by dynamically merging redundant neurons during training. At specified pooling intervals, the algorithm computes activations of neurons and evaluates pairwise cosine similarity. Neuron pairs with similarity exceeding a defined threshold are merged, reducing over-parameterization and enhancing model efficiency. The merging process creates a new neuron with averaged weights from the selected pair, effectively streamlining the network architecture. This approach decreases model complexity, accelerates inference, and mitigates overfitting by preventing redundant feature learning. FGPNN is particularly advantageous for resource-constrained environments, offering a balance between model size and performance.\medskip

\begin{algorithm}
\caption{Algorithm: FGPNN with Fuzzy Pooling}\label{alg:cap}
\begin{algorithmic}[1]
\Require Neural Network NN, Threshold $\tau$, Pooling Interval $p$, Epochs $max\_epochs$
\Ensure Optimized Neural Network NN
\State Initialize Neural Network NN
\For{epoch = 1 to $max\_epochs$}
    \State Train NN for one epoch using backpropagation
    \If{epoch \% $p == 0$} \Comment{Pooling interval reached}
        \State Compute activations $\mathcal{A} = \{a_1, a_2, \dots, a_N\}$
        \For{each pair of neurons $(i, j)$}
            \State Compute $S_{ij} = \text{cosine\_similarity}(a_i, a_j)$
        \EndFor
        \State Identify pairs $(i, j)$ where $S_{ij} > \tau$
        \For{each identified pair $(i, j)$}
            \State Apply fuzzy pooling:
            \[
            v_c \gets \text{merge}(a_i, a_j)
            \]
            \State Update vertex attributes:
            \[
            \sigma(v_c) \gets \sigma(a_i) \wedge \sigma(a_j)
            \]
            \State Update edge weights for all $u$ connected to $a_i$ or $a_j$:
            \[
            \mu(u, v_c) \gets \mu(u, a_i) \wedge \mu(u, a_j)
            \]
        \EndFor
        \State Update NN structure to reflect pooling
    \EndIf
\EndFor
\Return Optimized Neural Network NN
\end{algorithmic}
\end{algorithm}

For a neural network's hidden layer to achieve structural optimization, its vertices (neurons)  must be dynamically constrained. Let us consider a complete neural network optimization problem of the following general form:
Here, in the illustration,the first layer is input layer, with membership values. This  is the latent solution modeled by the neural network, parameterized by edge membership value  $\mu_i$ and biases $\tau$, $\Pi_i$ represents the vertex membership value (set of neurons) in a hidden layer with particular criteria according to the system, the dataset used for training, are inputs, and   are target outputs.
To optimize the vertices, a pooling operation is applied to reduce weak neurons, constrained by a similarity measure and threshold.
Two types of Vertex Pooling Problems are
forward Pooling Problem, and backward pooling problem.
Given a fixed neural network structure and fixed pooling parameters (e.g., threshold $\tau$, pooling interval p), minimize the loss function  $L(\Pi_i, \mu_i, \sigma_c, \mu_c)$ while reducing redundant neurons in the hidden layer. Specifically, the threshold  $\tau$ and pooling rules are predefined, and the objective is to train the network while dynamically applying pooling every p epochs. The network minimizes the loss function using backpropagation, and the pooling operation reduces the neuron count by merging highly similar neurons.\medskip

The algorithm operates by computing the cosine similarity between neuron activations at specified pooling intervals during training. Neuron pairs $(p, q)$ with similarity exceeding a threshold $\tau$ are identified for merging. Instead of directly averaging weights, fuzzy pooling is applied, where neurons are treated as vertices in a fuzzy graph. The merging of neurons corresponds to identifying two vertices in the graph, pooling them into 
a new vertex $v_c$. This process is formalized by modifying the vertex attributes $\sigma$ and edge weights $\mu$ through fuzzy conjunction operations, ensuring that the new vertex inherits the intersecting features of the original neurons.\medskip
\begin{table}[h]
    \centering
    \small
    \begin{tabular}{m{2cm} m{2cm} m{2cm} m{2cm} m{2cm}}
        \toprule
        \textbf{Epoch Number} & \multicolumn{2}{c}{\textbf{Dataset 1 (100 Floating Point Numbers)}} & \multicolumn{2}{c}{\textbf{Dataset 2 (1000 Floating Point Numbers)}} \\
        & \textbf{Baseline Model Loss} & \textbf{Contraction Model Loss} & \textbf{Baseline Model Loss} & \textbf{Contraction Model Loss} \\
        \midrule
         1000  & 0.2484262 & 0.24902839 & 0.04808859 & 0.05095089 \\
        2000  & 0.24644552 & 0.24737148 & 0.02673441 & 0.03423685 \\
        3000  & 0.24172867 & 0.24322081 & 0.0225607 & 0.02799122 \\
        4000  & 0.22235309 & 0.22322397 & 0.02065131 & 0.02534179 \\
        5000  & 0.08141801 & 0.08621347 & 0.01946583 & 0.02372554 \\
        6000  & 0.03307482 & 0.03479826 & 0.01863874 & 0.02258807 \\
        7000  & 0.02065111 & 0.0210532 & 0.01801961 & 0.02172597 \\
        8000  & 0.01508629 & 0.01499302 & 0.01753345 & 0.02105261 \\
        9000  & 0.01191593 & 0.01160779 & 0.01713826 & 0.02905477 \\
        10000 & 0.00985495 & 0.00944181 & 0.01680855 & 0.02134974 \\
        11000 & 0.00839914 & 0.00792686 & 0.01652781 & 0.02203853 \\
        12000 & 0.00731084 & 0.00680104 & 0.01628488 & 0.02133787 \\
        13000 & 0.00646336 & 0.00592841 & 0.01607185 & 0.01910963 \\
        14000 & 0.00578289 & 0.00523131 & 0.01588298 & 0.01774443 \\
        15000 & 0.00522343 & 0.00466176 & 0.01571395 & 0.01686013 \\
        16000 & 0.00475474 & 0.0041883 & 0.01556151 & 0.01622786 \\
        17000 & 0.0043561 & 0.0037892 & 0.01358965 & 0.01574608 \\
        18000 & 0.00401277 & 0.00344893 & 0.01229743 & 0.01536185 \\
        19000 & 0.00371397 & 0.003156 & 0.01200986 & 0.01504486 \\
        20000 & 0.00345161 & 0.00290172 & 0.01161919 & 0.01477655 \\
        21000 & 0.00321949 & 0.00267938 & 0.01139701 & 0.05289028 \\
        22000 & 0.00301276 & 0.0024837 & 0.01120529 & 0.032185 \\
        23000 & 0.00282758 & 0.00231048 & 0.01155281 & 0.01360401 \\
        24000 & 0.00266084 & 0.00215633 & 0.01119383 & 0.01967304 \\
        25000 & 0.00251002 & 0.00201849 & 0.01252019 & 0.01630115 \\
        26000 & 0.00237303 & 0.00189469 & 0.01281253 & 0.01511133 \\
        27000 & 0.00224815 & 0.00178303 & 0.01078383 & 0.01501028 \\
        28000 & 0.0021339 & 0.00168195 & 0.01504412 & 0.01467604 \\
        29000 & 0.00202906 & 0.00159011 & 0.01034442 & 0.01443588 \\
        30000 & 0.00193257 & 0.00150639 & 0.01032878 & 0.01423167 \\
        40000 & 0.00127634 & 0.0009589 & 0.00961296 & 0.01490049 \\
        50000 & 0.00092503 & 0.00068232 & 0.01582948 & 0.01198592 \\
        \bottomrule
    \end{tabular}
    \label{tab4}
     \caption{Comparison of Baseline and Contraction Model Loss for Two Datasets}
\end{table}

\begin{sidewaystable}[htpb]
\centering
\begin{tabular}{|p{3cm}|p{5cm}|p{5cm}|p{5cm}|}
\hline
\textbf{Aspect}                  & \textbf{Details}                                                                   & \textbf{Baseline Model}                           & \textbf{Pooling Model}                              \\ \hline
\textbf{Network Architecture}    & 2 input neurons, 2 hidden layers (8 neurons each), 2 output neurons               & Traditional backpropagation                      & Pooling every 10,000 epochs                         \\ \hline
\textbf{Activation Function}     & Sigmoid ($\sigma(x) = \frac{1}{1 + e^{-x}}$)                                       & Used sigmoid in all layers                      & Same sigmoid function used                         \\ \hline
\textbf{Dataset 1}               & 100 ordered pairs (sum $<$ 1: Class 0; sum $>$ 1: Class 1)                         & Performed well, achieved perfect decision boundaries & Performed similarly, classified all points correctly \\ \hline
\textbf{Dataset 2}               & 1000 ordered pairs (more complex data)                                            & More accurate in classification                & Slightly less accurate; missed some points         \\ \hline
\textbf{Loss Minimization Speed} & Loss minimized using backpropagation, learning rate 0.025                          & Slower minimization                             & Faster minimization with temporary spikes after pooling \\ \hline
\textbf{Learning Rate}           & 0.025                                                                             & Adjusted consistently                           & Same, with quicker adjustments after pooling       \\ \hline
\textbf{Epochs}                  & 50,000 training iterations                                                        & Steady performance throughout                  & Spikes after pooling events, then rapid recovery   \\ \hline
\textbf{Decision Boundary}       & Graphical separation of Class 0 and Class 1 in real-valued vector space           & More precise on Dataset 2                      & Missed a few points on Dataset 2                  \\ \hline
\textbf{Confusion Matrix}        & Evaluated accuracy of classification                                              & Ideal leading diagonal for Dataset 1           & Near-perfect for Dataset 1; lower accuracy for Dataset 2 \\ \hline
\textbf{Best Use Case}           & Scenarios requiring high precision on complex datasets                            & Suitable for high-accuracy applications        & Scenarios prioritizing faster training with tolerance for minor errors \\ \hline
\end{tabular}
\caption{Comparison of Baseline and Pooling Neural Network Models}
\label{tab3}
\end{sidewaystable}

The pooling step reduces the number of neurons while preserving essential network connectivity and information. The updated network, denoted as $G/_{pq}$, reflects the pooled structure, with connections to neighboring neurons recalculated based on the fuzzy intersection of shared edges. This adaptive pooling strategy enhances model interpretability, prevents overfitting by reducing redundancy, and accelerates inference by decreasing the overall parameter count. FGPNN with fuzzy pooling is particularly useful for resource-constrained environments, where efficiency and accuracy must be balanced.

A predefined value that determines whether two neurons are "similar enough" to be merged. Higher results in more aggressive pooling.
The number of epochs after which pooling is applied during training (e.g., every 10,000 epochs).Two neurons are merged   and   are replaced with a new neuron, This reduces the total number of neurons in the hidden layer.

\section{Conclusion}
In this research, we have explored the potential of fuzzy graph pooling within the realm of
neural networks, with a particular focus on fuzzy graph networks. Fuzzy graphs, with their ability
to represent uncertainty and partial membership, present a versatile framework that can effectively
model complex relationships in real-world scenarios.
The reviewed papers provide a foundation for understanding fuzzy graphs, their operations, and
their potential applications in graph neural networks. The principles of fuzzy graph pooling,
as explored in these studies, offer a structured approach to handling uncertainty and optimizing
information flow within GNN architectures. Future research could focus on implementing and
evaluating these concepts in practical GNN frameworks, with the goal of enhancing their
efficiency, interpretability, and robustness in real-world applications.
The pooling model is notable for minimizing loss incredibly quickly and still holds up in
accuracy despite obviously having less hidden layer neurons. The benefits are short lasted
however, as this study shows that if trained for longer, or with a bigger dataset, the pooling
model does tend to struggle making it infeasible in the later stages of training. It is therefore,  from comparison table (Tables  \ref{tab3} and \ref{tab4} ) recommended that pooling be applied in the early stages of training advanced
models in deep learning.
Overall, this research contributes to the ongoing exploration of fuzzy graphs and their integration
into neural network architectures. By bridging these domains, we aim to advance the understanding
and applicability of fuzzy graphs in modeling and analyzing complex system.

\medskip


\end{document}